\documentclass[lettersize,journal]{IEEEtran}
\usepackage{amsmath,amsfonts}
\usepackage{algorithm}
\usepackage{algpseudocode}
\usepackage{hyperref}
\hypersetup{hidelinks}
\usepackage{array}
\usepackage{courier}
\usepackage{subfigure}
\usepackage{textcomp,soul,color,xcolor}
\usepackage{stfloats}
\usepackage{url}
\usepackage{verbatim}
\usepackage{graphicx}
\usepackage{booktabs}
\usepackage{multirow}
\usepackage{algpseudocode}
\usepackage{cite}
\usepackage{amsthm}
\usepackage{bm}
\theoremstyle{definition}

\theoremstyle{definition}
\newtheorem{remark}{Remark}
\theoremstyle{definition}
\newtheorem{assumption}{Assumption}
\theoremstyle{definition}
\newtheorem{lemma}{Theorem}

\hypersetup{hidelinks,
	colorlinks=true,
	allcolors=black,
	pdfstartview=Fit,
	breaklinks=true}
\newcommand{\MYhref}[3][blue]{\href{#2}{\color{#1}{#3}}}%

\title{Improved Consensus ADMM for Cooperative Motion Planning of Large-Scale Connected Autonomous Vehicles with Limited Communication}

\author{Haichao Liu, Zhenmin Huang, Zicheng Zhu, Yulin Li, Shaojie Shen, and Jun Ma

\thanks{Haichao Liu, Zhenmin Huang, Yulin Li, and Jun Ma are with the Robotics and Autonomous Systems Thrust and the Department of Electronic and Computer Engineering, The Hong Kong University of Science and Technology, China (e-mail: hliu369@connect.ust.hk; zhuangdf@connect.ust.hk; yline@connect.ust.hk; jun.ma@ust.hk).}
\thanks{Zicheng Zhu is with the Department of Electrical and Computer Engineering, National University of Singapore, Singapore (e-mail: zhuzicheng@u.nus.edu)}
\thanks{Shaojie Shen is with the Department of Electronic and Computer Engineering, The Hong Kong University of Science and Technology, China (e-mail: eeshaojie@ust.hk).}\thanks{This work has been submitted to the IEEE for possible publication. Copyright may be transferred without notice, after which this version may no longer be accessible.}
}

\begin{document}
\maketitle
\begin{abstract}

This paper investigates a cooperative motion planning problem for large-scale connected autonomous vehicles (CAVs) under limited communications, which addresses the challenges of high communication and computing resource requirements. 
Our proposed methodology incorporates a parallel optimization algorithm with improved consensus ADMM considering a more realistic locally connected topology network, and time complexity of $\mathcal{O}(N)$ is achieved by exploiting the sparsity in the dual update process.
To further enhance the computational efficiency, we employ a lightweight evolution strategy for the dynamic connectivity graph of CAVs, and each sub-problem split from the consensus ADMM only requires managing a small group of CAVs.
The proposed method implemented with the receding horizon scheme is validated thoroughly, and comparisons with existing numerical solvers and approaches demonstrate the efficiency of our proposed algorithm. Also, simulations on large-scale cooperative driving tasks involving 80 vehicles are performed in the high-fidelity  CARLA simulator, which highlights the remarkable computational efficiency, scalability, and effectiveness of our proposed development.
Demonstration videos are available at \MYhref[black]{https://henryhcliu.github.io/icadmm\_cmp\_carla}{https://henryhcliu.github.io/icadmm\_cmp\_carla}. 

\end{abstract}
\begin{IEEEkeywords}
 Connected autonomous vehicles (CAVs), cooperative motion planning, alternating direction method of multipliers (ADMM), iterative linear quadratic regulator (iLQR).
\end{IEEEkeywords}

\section{Introduction}
Recent developments in wireless communications and intelligent transportation systems, particularly the integration of autonomous driving with vehicle-to-everything (V2X) technologies, have provided valuable insights in addressing urban transportation challenges~\cite{chang2023bev,wegener2021longitudinal}.
Additionally, advanced planning and control methodologies for multi-agent systems have shown significant potential in the field of autonomous vehicles~\cite{Toumieh2022Decentralized,Xiao2022Distributed}.
As a result, connected autonomous vehicles (CAVs) have emerged as a promising technological solution for reducing traffic congestion, preventing accidents, and enhancing both driving safety and efficiency.
Significantly, achieving cooperative driving is a multi-faced endeavor that involves various essential aspects, including environmental perception~\cite{wen2022deep}, global path generation~\cite{zhang2023robot}, motion planning~\cite{Duan2023Cooperative}, trajectory tracking control~\cite{hu2019MME}, etc.
Within the framework of cooperative motion planning, CAVs collaboratively make joint decisions and subsequently generate cooperative trajectories aimed at optimizing overall system performance and ensuring safe and efficient maneuvers.
However, the incorporation of various key factors, such as inter-vehicle collision avoidance, vehicle model, and various physical constraints, renders cooperative motion planning a challenging task. 
This is especially evident when dealing with a larger number of CAVs or when operating in complex traffic environment.
To address the cooperative motion planning problem for CAVs, a series of representative works have been presented, drawing inspiration from advancements in both learning-based and optimization-based methodologies~\cite{claussmann2020review}.

\begin{figure}[t]
    \centering
    \includegraphics[width=\linewidth]{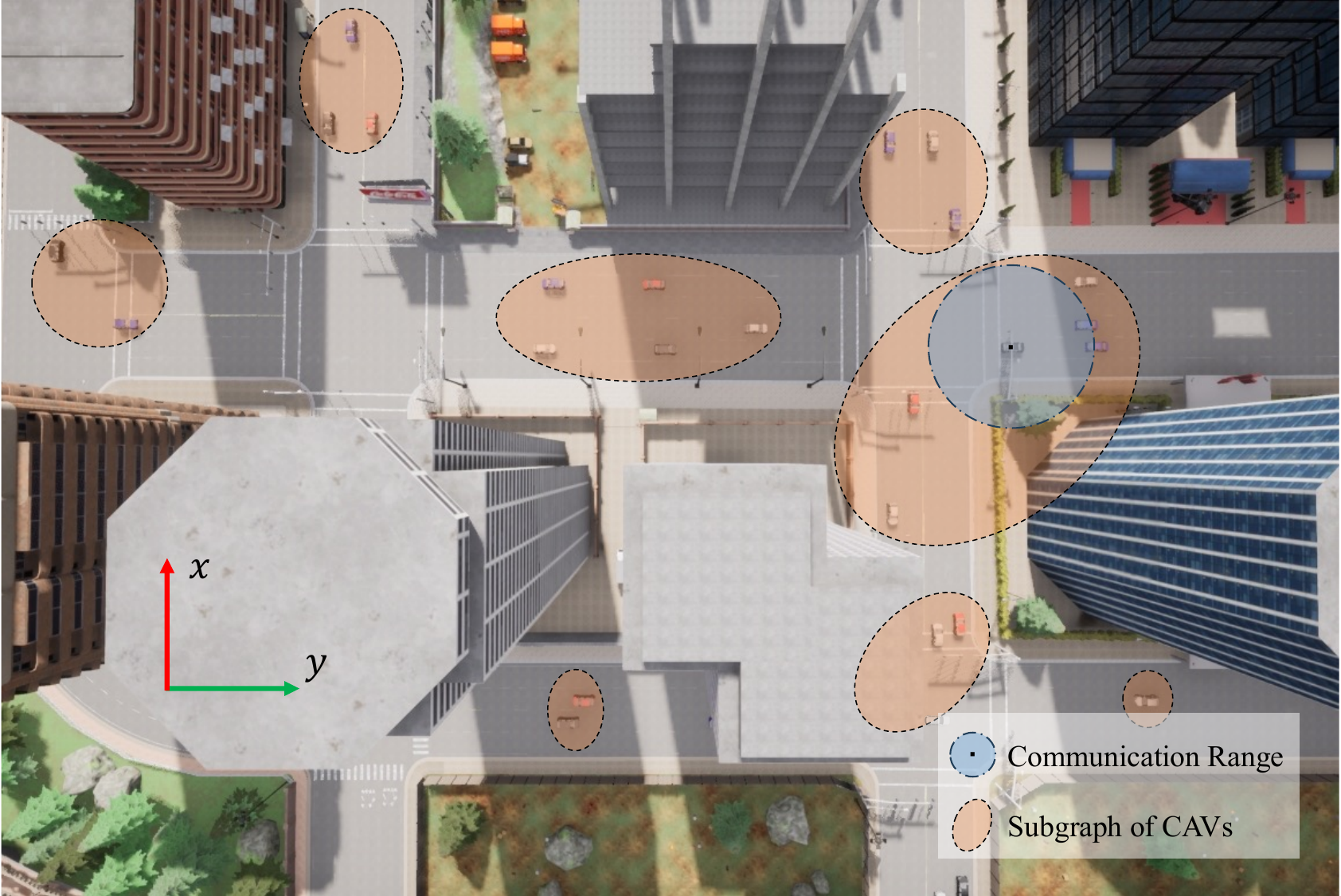}
    \caption{Demonstration of the proposed collaborative motion planning strategy for large-scale CAVs in urban driving scenarios. The problem is formulated, where one optimal control problem is constructed within each subgraph (orange region) and solved by the improved consensus ADMM algorithm. The subgraphs are generated by the proposed graph evolution algorithm, in which the edges of the dynamic connectivity graph are created within each blue region centered by each CAV within a specified subgraph.}
    \label{fig:limitedTeleRange}
\end{figure}

The use of learning-based algorithms, represented by reinforcement learning (RL), has become prevalent in tackling motion planning problems~\cite{teng2023motion,duan23relaxed}. In the context of multi-agent path finding, PRIMAL serves as a decentralized framework that plans efficient single-agent paths by imitating a centralized expert through imitation learning, thereby facilitating implicit coordination during online path planning~\cite{sartoretti2019primal}. Despite its scalability to larger teams, PRIMAL suffers from performance deterioration in structured and densely occupied environments that require substantial efforts in coordination of agents. 
To address this limitation, drawing inspiration from slime mold, RL-APCP$^3$ presents an effective solution to the deterioration problem in dense maps using bionic SARSA algorithm~\cite{liu2021solving}. Specifically for CAVs, a learning-based iterative optimization algorithm was used to explore the collision-free cooperative motion planning problem for CAVs at unsignalized intersections~\cite{klimke2022cooperative}. In addition, a novel multi-agent behavioral planning scheme for CAVs was investigated by exploiting RL and graph neural networks at urban intersections, which improves the vehicle throughput significantly~\cite{wang2023coordination}. However, the learning-based methods typically require substantial real-world data and may be limited by their interpretability, restricting their generalization to a wider range of traffic scenarios.

Alternatively, optimization-based approaches commonly adopt an optimal control problem (OCP) formulation for motion planning tasks. This approach offers several benefits such as providing a precise mathematical representation, ensuring interpretability, and proving optimality~\cite{li202optimization}. When dealing with cooperative motion planning problems involving CAVs with nonlinear vehicle models, these OCPs can be typically solved using well-established nonlinear programming solvers such as interior point optimizer (IPOPT) and sequential quadratic programming (SQP).
Additionally, the iterative linear quadratic regulator (iLQR) leverages the benefits of differential dynamic programming (DDP) for addressing nonlinear optimization problems, while retaining only the first-order term of dynamics through Gauss-Newton approximation to enhance the computational efficiency~\cite{lee2022gpu}.
While the traditional iLQR algorithm is effective and efficient in handling system dynamic constraints, this method cannot directly address various inequality constraints entailed by collision avoidance requirements and other physical limitations.
To overcome this limitation, many representative works have been presented to cope with inequality constraints within the DDP/iLQR framework, including control-limited DDP~\cite{mastalli2022feasibility,aoyama2021constrained} and constrained iLQR~\cite{chen2019autonomous,Ma2022Alternating,ma2023local}.
To further reduce the computational burden, especially for large-scale systems, the alternating direction method of multipliers (ADMM) was developed by decomposing the original optimization problem into several sub-problems~\cite{boyd2011distributed}.
With its parallel and distributed characteristics, the ADMM can be well-suited to address the cooperative motion planning problem for CAVs~\cite{zhang2021semi}.
Leveraging the dual consensus ADMM~\cite{banjac2019decentralized}, a fully parallel optimization framework was established for cooperative trajectory planning of CAVs, which effectively distributes the computational load evenly among all participants, allowing for real-time performance~\cite{huang2023decentralized}.

However, the aforementioned works rely on a strong assumption of 
fully connected topology network between CAVs, which is not applicable in real-world driving conditions under V2X communication, due to high delays and low transmission reliability associated with long-range and large-scale V2X communication~\cite{ye2023spatial}. To address the communication delays, the QuAsyADMM algorithm that incorporates a finite time quantized averaging approach was proposed, and it achieves the performance comparable to that of algorithms under the perfect communication~\cite{rikos2023asynchronous}. On the other hand, decentralized methods have also been explored, utilizing ADMM applied to the dual of the resource allocation problem, where each agent exchanges information solely with its neighbors~\cite{banjac2019decentralized,shorinwa2020scalable}. The convergence of this method has been proven, and its effectiveness has been illustrated through numerical examples.
Building upon the aforementioned algorithm, the ND-DDP algorithm was presented, where an efficient three-level architecture was proposed employing ADMM for consensus and DDP for scalability~\cite{saravanos2023distributed}. With this approach, only local communication is required. However, these solutions, which focus on commuting with neighbors, exhibit long computational time when the number of agents increases. Also, these approaches only consider a graph that does not evolve with time, where the assigned neighbors of each agent are assumed to be fixed. Yet in the real-world urban driving scenarios, the topology network between vehicles is spatiotemporally dynamic, with each node experiencing constantly fast changing neighborhoods throughout the driving process~\cite{sheng2022graph}. Nonetheless, maintaining a dynamic connectivity graph within the whole planning horizon is challenging considering the uncertainty in the future trajectories of all vehicles. 
Fortunately, receding horizon frameworks formulated with the model predictive control (MPC), 
offer the potential for 
more robust driving performance
~\cite{brudigam2021stochastic,liu2023integrated}. Nonetheless, solving a centralized MPC problem involving all the agents is rather time-consuming~\cite{qu2023rl}. Alternatively, the distributed MPC has also been widely deployed~\cite{luis2020online,Qin2023Asynchronous}, in which each agent only considers its own objective thereby obviously reducing the scale of optimization. In~\cite{cheng2021admm}, a partially parallel computation framework is proposed to solve the multi-agent MPC problem based on ADMM.
However, even if all agents can be deployed in a distributed
manner, the collision avoidance requirements (which naturally lead to the coupling effects among agents) result in a dramatic increase in the number of constraints to the optimization problem.
Therefore, all these aforementioned methods suffer from significant computational burdens, rendering them difficult to deploy in large-scale CAVs.
Apparently, to establish an efficient cooperative driving framework for an excessive number of CAVs, necessary efforts are required to decompose the overall OCP into a series of fully parallel and solvable sub-problems,
such that the problem can be solved efficiently.

Driven by the aforementioned challenges, this paper introduces an improved consensus ADMM for cooperative motion planning of large-scale CAVs with locally connected topology network, together with a graph evolution strategy to limit the scale of each OCP corresponding to each subset of the CAVs. The overview of the proposed methodology is demonstrated in Fig.~\ref{fig:limitedTeleRange}. 
Particularly, to achieve high computational efficiency, we transform and decouple the inter-vehicle constraints and leverage parallel solving capabilities of the ADMM. 
Moreover, we reduce the complexity of the dual update process by capturing the sparsity of the problem and deriving a simplified equivalent form. 
Furthermore, we improve its real-time performance by limiting the scale of the optimization problem by integrating the dynamic graph evolution algorithm. 

The key contributions of this work are as follows:

\begin{itemize}

\item An improved consensus ADMM algorithm is introduced to solve the cooperative motion planning problem with a locally connected topology network, such that the problem is decomposed into a series of sub-problems that can be solved in a parallel manner. This algorithm achieves a complexity of $\mathcal{O}(N)$ by exploiting the sparsity in the dual update process.

\item A graph evolution strategy for a dynamic connectivity graph of CAVs is presented to characterize the interaction patterns among CAVs, and each sub-problem attempts to manage a small group of CAVs. In this sense, the computational efficiency can be further enhanced.

\item A closed-loop framework for cooperative motion planning is then proposed, which integrates the graph evolution in a receding horizon fashion. This enhances the ability of consensus ADMM to cope with large-scale CAVs and provides a more robust driving performance.

\item Real-time performance and superiority of our proposed method is demonstrated through numerical simulations. The large-scale cooperative motion planning simulations involving 80 CAVs are executed in a high-fidelity urban map in CARLA, and the results showcase the  effectiveness and robustness of our proposed methodology.

\end{itemize}

The rest of this paper is organized as follows: Section II provides the definitions of notations. In Section III, we present the CAVs' connectivity pattern using graph theory and formulate the cooperative motion planning problem. Section IV reformulates the problem and an improved consensus ADMM is proposed. In Section V, we present a closed-loop receding horizon-based cooperative motion planning strategy based on our graph evolution algorithm. In Section VI, we demonstrate the performance of the proposed methodology through a series of simulations. Section VII presents the conclusion.
\section{Notations}\label{sec:notations}
We use the lowercase letter and uppercase letter to represent a vector $\bm x \in \mathbb{R}^n$ and a matrix $\bm X \in \mathbb{R}^{n\times m}$, respectively. 
Also, $\text{diag}\{a_1, a_2, \cdots, a_n\}$ represents a diagonal matrix with $a_i, i \in \{1,2,\cdots, n\}$ as diagonal entries. Similarly, $\text{blocdiag}\{\bm A_1, \bm A_2, ..., \bm A_n\}$ is used to represent a block diagonal matrix with block diagonal entries $\bm A_1, \bm A_2, ..., \bm A_n$.
The sets of non-negative integers and real numbers are represented as $\mathbb Z_+$ and $\mathbb R_+$, respectively. 
A logical/Boolean matrix is denoted as $\mathbb{B}$. $[\, \cdot\, ]^i$ represents the variable of the $i$th CAV, while $[\, \cdot\, ]_\tau$ indicates the variable at time step $\tau$. For simplicity, a matrix concatenated in row $[\bm s^1 ,\bm  s^2 , \cdots,\bm  s^n]$ is denoted as $[\bm s^i], i = 1,2,\cdots, n$. On the contrary, we denote the matrix with entries $\bm s^i$ concatenated by column as $[\bm s^1;\bm s^2;...;\bm s^n]$. The squared weighted $L_2$-norm $\bm x^T \bm M \bm x$ is simplified as $\| \bm x \|^2_{\bm M}$. We denote an \textit{indicator function} w.r.t. a set $\mathbb{\mathcal{X}}\in \mathbb{R}^n$ as $\mathcal{I}_\mathcal{X}(\bm x)$, which takes value 0 if $\bm x\in \mathbb R^n$ belongs to $\mathcal{X}$ and $+\infty$ otherwise. Element-wise maximization and minimization operations for the vectors $\bm a$ and $\bm b$ are denoted as $\max\{\bm a,\bm b\}$ and $\min\{\bm a,\bm b\}$, respectively.

\section{Problem Formulation}
\subsection{Representation of CAVs with Graph Theory}\label{sec:graphTheory}

The relationship of CAVs can be represented using graph theory, where nodes represent individual vehicles and edges represent communication links between them. This representation allows for a comprehensive analysis of the communication relationships among CAVs. 
Let $\mathcal G = (\mathcal V, \mathcal E)$ be an undirected graph, where $\mathcal V$ represents the set of vehicles and $\mathcal E$ represents the set of communication links between nodes. Each vehicle $n^i \in \mathcal V$ is associated with a state vector $\bm {z}^i$ and a control input vector $\bm u^i$. It is pertinent to note that according to the relative locations of the CAVs, the subgraphs of $\mathcal G$ are noted as $\mathcal H$, evolved between the planning horizons of cooperative driving. Particularly, even in the same subgraph, the vehicle $n^i$ only exchanges information with its surrounding vehicles within a certain communication range. 
For clarification, we denote the number of CAVs in the whole traffic system as $M$. Accordingly, the distance matrix $\bm D = [a_{ij}] \in \mathbb{R}^{M\times M}$ with $i,j\in \{1,2,\cdots,M\}$ of the whole connectivity graph $\mathcal G$ can be formulated. The number of CAVs in each subgraph $\mathcal{H}_k = (\mathcal{N}_k, \mathcal{E}_{k,h})$ is denoted as $N_k$. Note that we omit the subscript $k$ (the index of a subgraph) of the above notations by default for simplicity.
As shown in Fig.~\ref{fig:limitedTeleRange}, for each pair of vehicles, an edge $(n^i, n^j) = d^{i,j} \in \mathcal{E}_h\subseteq \mathcal{E}$ is established if the distance between the pair of vehicles $d^{i,j}$ is smaller than the communication range $r^i_\text{tele}\in \mathbb{R}_+$. In addition, we define the cardinality of a node $n^i$ in $\mathcal{H}$ as $|{n^i}| = \text{deg}(n^i)$, describing the number of indices of the connected non-self nodes in 
\begin{equation}\label{eq:neighbor_node_set}
    \mathcal{N}^i = \{j\in\mathcal{N}\, | \,(i,j)\in \mathcal{E}_h\}.
\end{equation}
The above representation offers clear and logical insights into the connectivity and communication patterns among CAVs, thereby aiding the practical implementation of decentralized cooperative motion planning algorithms. 
\subsection{Vehicle Model}\label{subsec:VehicleDynamicsModel}
For all CAVs, the non-linear bicycle kinematics model in \cite{tassa2014control} is used. The state vector of a vehicle is $\bm z=[x,y,\theta,v]^T$, where $x$ and $y$ are the positions of the rear axle of the vehicle from $x-$axis and $y-$axis in the global Cartesian map, $\theta$ 
and $v$ are the heading angle (from the positive direction of the $x$-axis) and velocity of the vehicle, respectively. The control input vector is $\bm u = [a, \delta]^T$, where $a$ and $\delta$ are the acceleration and steering angle of the front wheel, respectively. 
Then, the vehicle model can be expressed as:
\begin{equation}
\label{eq:bicycle_dynamics}
\begin{split}
    \bm z_{\tau+1} = f(\bm z_\tau,\bm u_\tau)
    =\left[\begin{array}{c}
x_{\tau}+f_i\left(v_\tau, \delta_\tau\right) \cos \theta_\tau \\
y_{\tau}+f_i\left(v_\tau, \delta_\tau\right) \sin \theta_\tau\\
\theta_\tau+\arcsin \left(\frac{g\left(v_\tau, \delta_\tau\right)}{b_i}\right) \\
v_\tau+\tau_s a_\tau 
    \end{array}\right]
\end{split},
\end{equation}
where $g(v,\delta)=v_\tau \Delta T  \sin \left(\delta_\tau\right)$,  $f_i(v,\delta)$ is defined as:
\begin{equation}
    f_i(v, \delta)=b_i+ v \Delta T \cos \delta-\sqrt{b_i^2-g(v,\delta)^2},
\end{equation}
where $b_i$ is the wheelbase of the $i$th CAV and $\Delta T=0.1$~s.
In addition, we formulate two matrices to denote the state vectors and control inputs of the $i$th CAV by $$\bm Z^i=\{\bm z_0^i, \bm z_1^i, ... , \bm z_T^i\},\bm U^i=\{\bm u_0^i, \bm u_1^i, ... , \bm u_{T-1}^i\}.$$ 
\subsection{Cooperative Motion Planning with Communication Limits}
This work considers a cooperative motion planning task in one subgraph $\mathcal{H}$ where $N$ CAVs drive on the urban road with their destinations and global path. Assuming all the vehicles share the same control and velocity limits, we formulate this problem as an OCP:
\begin{equation}
\label{NMPCOptProb}
\begin{array}{ll}
\underset{{\boldsymbol{z_\tau}^i}, {\boldsymbol{u_\tau^i}}}{\min} & \sum_{i=1}^N Q_i(\bm Z^i, \bm U^i)\\
\text { s.t.} & \boldsymbol z^i_{\tau+1}=f(\boldsymbol z^i_\tau,\boldsymbol u^i_\tau),  \\
& \bm z^i_{\tau+1}\in \mathcal{S}^i_\tau,\\
& -\boldsymbol {\underline u}^i \preceq \boldsymbol {u}_\tau^i \preceq \boldsymbol {\overline u}^i,\\
& -\boldsymbol {\underline z}^i \preceq \boldsymbol z_\tau^i \preceq \boldsymbol {\overline z}^i,\\
& -\boldsymbol {\underline z}^i \preceq \boldsymbol z_T^i \preceq \boldsymbol {\overline z}^i,\\
&\forall \tau \in \mathcal{T},\ \forall i \in \mathcal{N},
\end{array}
\end{equation}
where $\mathcal{N}$ represents the set of index of CAVs in the traffic system of a subgraph $\mathcal H$. $\mathcal{T}=\{0,1,..., T-1\}$ is the temporal horizon within a cooperative motion planning episode. 
In addition, $\mathcal{S}_\tau^i$ is the collision-free region of the $i$th CAV at time step $\tau$, which will be defined in Section~\ref{subsec:multualAvoidance}. ${\bm {\underline u}}^i$ and ${\bm {\overline u}}^i$ mean the lower and upper bound of the control input for $i$th CAV, respectively. Similar meanings take effect on ${\bm {\underline z}}^i$ and ${\bm {\overline z}}^i$.

Moreover, the objective of this OCP is to follow the reference trajectories generated by an efficient sampling-based algorithm. For an individual vehicle, it can be expressed as
\begin{equation}
    Q_i(\bm Z^i, \bm U^i) = \sum _{\tau=0}^T \| \bm z^i_\tau-\bm z^i_{\text{ref},\tau}\|^2_{\bm Q}+ \sum_{\tau=0}^
{T-1}\| \bm u^i_\tau\|^2_{\bm R},
\end{equation}
where the weighted matrices are $\bm Q = \text{diag}\{q_x, q_y, q_\theta, q_v\}$ and $\bm R = \text{diag}\{q_\delta,q_a\}$. The expected outcome of the optimization problem (\ref{NMPCOptProb}) is to cooperatively follow the reference states with different weights while complying with vehicle model and staying in the safe region $\mathcal{S}^i_\tau$ to avoid collisions with each other.

\subsection{Inter-Vehicle Collision Avoidance of CAVs}\label{subsec:multualAvoidance}

\begin{figure*}[t]
    \centering
    \includegraphics[width=\linewidth]{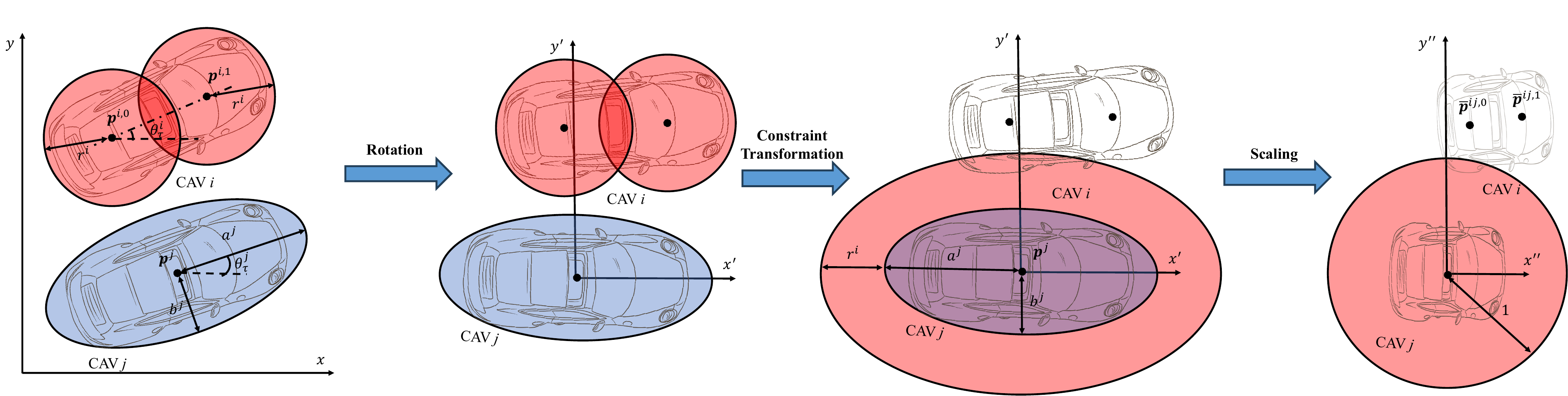}
    \caption{Geometric relationship and transformation for inter-vehicle collision avoidance, in which the two CAVs in a pair are enwrapped with a double circle and an ellipse, respectively. The radius of CAV $i$ is $r^i$, while the length of semi-major and semi-minor axes of CAV $j$ are $a^j$ and $b^j$, respectively. The rotation angle between the global coordinate system and the $a$-paralleled coordinate system is the same as the heading angle of the $\theta^j$. After the above transformation, the collision discriminant becomes judging whether a point is within a unit circle.}
    \label{graphicMutualAvoidance}
\end{figure*}
Under connected conditions, assume each CAV can receive real-time information from its neighbors in $\mathcal{N}^i$ in a certain range and collaborate to generate maneuvers that prevent collisions. 
Typically, vehicles are represented as two circles, and the criterion for collision detection is the distance between each pair of circles. For instance, to construct the safe region $\mathcal{S}^i_\tau$, if there are $|n^i|=N^i$ CAVs around $i$th CAV, there would be $2^2N^i$ pairs of constraints that need to be considered.

Alternatively, we can reduce the number of above constraints by modeling certain CAVs, such as a CAV with a lower order in a pair of CAVs, as ellipses~\cite{schwarting2017safe}, while representing the other CAV in the pair as double circles. This would decrease the constraints number by $2N^i$ for constructing the $\mathcal{S}^i_\tau$. For a CAV pair ($n^i$ and $n^j$) using the above description, we formulate the following constraints for inter-vehicle collision avoidance:
\begin{equation}
\begin{aligned}
    & \frac{(\hat{x}^j_\tau-\hat{x}^{i,c}_\tau)^2}{(a^j+r^i)^2} + \frac{(\hat{y}^j_\tau-\hat{y}^{i,c}_\tau)^2}{(b^j+r^i)^2} \ge 1,\\ 
    & \forall {c}\in \mathcal{C}, j\in \mathcal{N}^i, i\in \mathcal{N}, \tau \in \mathcal{T},
\end{aligned}
\label{ellipseEquation}
\end{equation}
where $\hat{x}$ and $\hat{y}$ means the rotated position w.r.t. the $x'-y'$ coordinate parallel to the semi-major axis $a$ of the ellipse ($a$-parallelled coordinate) of $(x,y)$, $\mathcal{C}= \{0,1\}$ denotes the front and rear circle of a CAV, $\mathcal{N}^i\in \mathbb Z_+^{\text{deg}(n^i)}$ denotes the index set of the neighbor CAVs within the communication range $r_\text{tele}^i$ of the $i$th CAV.
The discriminant (\ref{ellipseEquation}) comes into effect for the following reason. In Fig.~\ref{graphicMutualAvoidance}, we can observe that after applying rotation and translation transformations to align with the $a$-paralleled coordinate system, the origin is the center of the CAV $n^j$ and the $x'$-axis is parallel to the semi-major axis of the CAV $n^j$. Then we proceed to enlarge the ellipse by the radius of the circles $r^i$ encapsulating the other CAV. Consequently, we utilize the coordinates of the circle centers in (\ref{ellipseEquation}) to assess whether these centers fall within the enlarged ellipse. If this condition is met, it signifies a collision between the two CAVs. We will make the scaling transformation to make a more compact constraint form in Section~\ref{subsec:convexAndReform}.

For the coordinate transformation of the CAV's position from the global coordinate to the $a$-parallelled coordinate, the following equation is given:
\begin{equation}
\bm {\hat{p}}_\tau^k=\left[\begin{array}{cc}
\cos \theta_\tau^j & \sin \theta_\tau^j \\
-\sin \theta_\tau^j & \cos \theta_\tau^j
\end{array}\right] \cdot \bm p_\tau^k = \bm{R}^j_\tau \cdot \bm p_\tau^k,
\label{transformEllipseCenter}
\end{equation}
where $\bm {\hat{p}}_\tau^k = [{\hat{x}}_\tau^k,{\hat{y}}_\tau^k]^T$ is the position of the $k$th vehicle under the $a$-parallelled coordinate, while $\bm p_\tau^k$ is the position under the global $x-y$ coordinate in Fig.~\ref{graphicMutualAvoidance}. Another notation $\bm R^j_\tau$ is the transformation matrix to the $a$-paralleled coordinate. Besides, if given the center position $\bm p_\tau^{i}$ of the CAV $n^i$, the center positions of the front and rear circles are expressed as:
\begin{equation}
\bm p_\tau^{i, c}=
\bm p^i_\tau
\pm
\left[
\begin{array}{c}
\cos{\theta^i_\tau}\\
\sin{\theta^i_\tau}
\end{array}
\right] d^{i,c}.
\label{transformEllipseCircleCenter}
\end{equation}
Clearly, There are two circles $\bm p_\tau^{i,0}$ and $\bm p_\tau^{i,1}$ that need to be transformed. The centers of these circles can be transformed using (\ref{transformEllipseCenter}), which involves replacing 
$\bm p_\tau^k$ with $\bm p_\tau^{i,c}$. With this, the transformed positions of the circles, $\bm {\hat{p}}_\tau^{i,0}$ and $\bm {\hat{p}}_\tau^{i,1}$, can be obtained. Additionally, the distance of the $c$th circle to the center of the $i$th CAV is denoted by $d^{i,c}$.

\section{Cooperative Motion Planning with Limited Communication via Improved Consensus ADMM}\label{sec:pipeline}
\begin{figure}[t]
    \centering
    \includegraphics[width=1\linewidth]{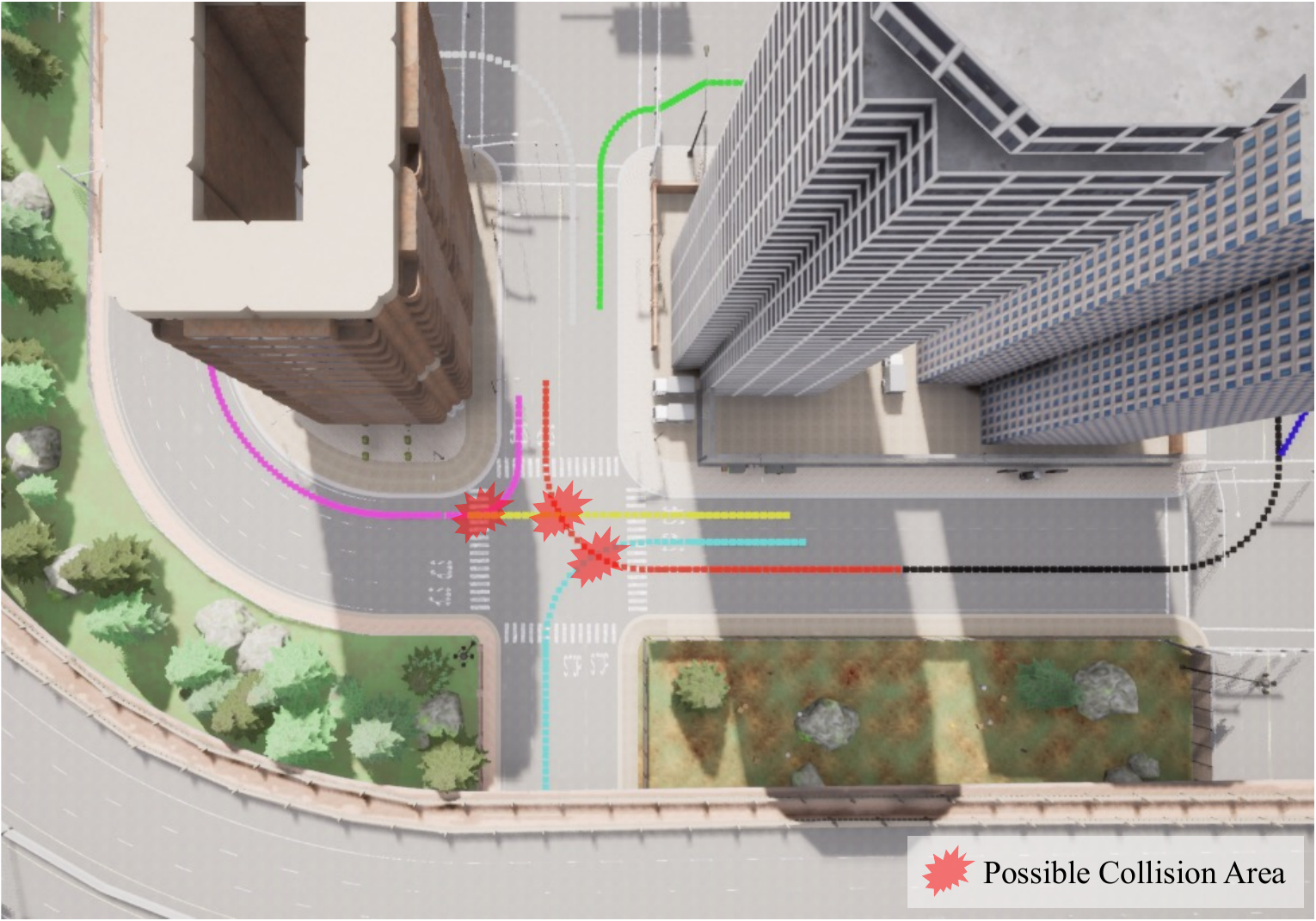}
    \caption{Smoothed guidance trajectories for cooperative motion planning. We generate these trajectories by sampling the road topology information from the OpenDRIVE map. Each vehicle's trajectory is visually represented by dots of different colors. The plotted guidance trajectories unveil potential collision conflicts that may arise within the intersection area.}
    \label{fig:initial_traj_for_efficiency_comp}
    \label{fig:initTrajGen}
\end{figure}
\subsection{Guidance Trajectory Generation and Dynamic Search}\label{guidanceTrajGen}
In the rough trajectory generation process, waypoints are sampled along the center line of lanes with equal intervals in the OpenDRIVE format map. As illustrated in Fig.~\ref{fig:initTrajGen}, use the same approach in~\cite{liu2023integrated}, for any starting point $\bm x^s = [p^{s}_x, p^{s}_y, \varphi^s]^\top$, A$^*$ search can be performed on the connectivity graph to obtain the shortest path leading to the target destination $\bm x^t = [p^{t}_x, p^{t}_y, \varphi^t]^\top$. Note that we use the Savitzky-Golay filter to improve the continuity and smoothness of the searched path, which makes the distribution of waypoints more even, especially around intersections, corners, and regions for lane changing. 
\begin{remark}
For cooperative motion planning problems, the temporal trajectories of CAVs may not always be strictly followed due to the influence of surrounding vehicles' behavior. To address this, it becomes necessary to enable CAVs to automatically follow an appropriate waypoint based on their current state. One approach to achieving this is by searching for the nearest waypoint that the vehicle should follow. 
\end{remark}
After the global reference waypoints are obtained, the local reference waypoints can be determined utilizing a Nearest Neighbor Search mechanism for updating the reference trajectory of each CAV. Concretely, at each time step, each CAV searches for the nearest waypoint using a KD-tree formulated by the waypoints in its rough reference trajectory. 

\begin{remark}
The utilization of the KD-tree data structure is shown to be highly efficient, particularly when dealing with a large number of waypoints. This guidance waypoint searching process minimally impacts the efficiency of the cooperative motion planning algorithm while offering a more robust and comfortable riding experience.
\end{remark}
\subsection{Convexification and Reformulation}\label{subsec:convexAndReform}

In this section, we assume that all the CAVs are allocated to the same subgraph $\mathcal H$. To achieve fully decentralized motion planning for the CAVs efficiently, the objective function of the OCP should be convexified, and the constraints should be linearized\cite{huang2023decentralized}. In this paper, we transform the optimization variables from $\bm z^i_\tau$ and $\bm u^i_\tau$ to perturbed forms $\bm {\Delta z}^i_{\tau}$ and $\bm {\Delta u_\tau}$.
Firstly, the vehicle model can be linearized by first-order Taylor expansion:
\begin{equation}
    \bm {\Delta z}^i_{\tau+1} = \bm A^i_{\tau}\bm {\Delta  z}^i_{\tau}+\bm B^i_{\tau}\bm {\Delta u}^i_{\tau},
\label{eq:DynamicsLinearization}
\end{equation}
where $
\bm A_\tau^i={\partial f\left(\bm z_\tau^i, \bm u_\tau^i\right)}/{\partial \bm z_\tau^i},\ \bm B_\tau^i={\partial f\left(\bm z_\tau^i,\bm u_\tau^i\right)}/{\partial \bm u_\tau^i}
$ are the partial derivatives of the vehicle model. 

Secondly, the inter-vehicle collision avoidance constraints between pairs of vehicles can be rewritten and convexified as follows. We define the following function with perturbed variable vector $\bm {\Delta\overline{p}}_\tau^k = [{\Delta\overline{x}}_\tau^k,{\Delta\overline{y}}_\tau^k]^\top$ as the discriminant for inter-vehicle collision avoidance in the $a$-paralleled coordinate:
\begin{equation}
\begin{aligned}
    & f_\mathcal{S} (\bm{z}_\tau^j, \bm{z}_\tau^{i,c}) = \bm{k}_\tau^{ij}\bm{\Delta \overline{p}}_\tau^{ij}+\|\bm{\overline{p}}_\tau^{ij} \|_2 - d_\text{safe},\\
    & \forall {c}\in \mathcal{C}, i\in \mathcal{N}^j, j\in \mathcal{N}^i, \tau \in \mathcal{T}^+,
\end{aligned}
\end{equation}
where $\bm k^{ij}_\tau = \frac{\bm{\overline{p}}_\tau^{ij}}{\|\bm{\overline{p}}_\tau^{ij}\|}$ is a unit vector for projective transformation of the perturbed displacement to the direction from the $i$th CAV to the $j$ th one, $d_\text{safe}\ge 1$ is the obstacle avoidance margin, $\mathcal{T}^+ = \{0,1,...,T\}$ , $\bm{\overline{p}}_\tau^{ij}$ is the relevant position of the pair of CAVs, and it can be calculated by:
\begin{equation}\label{eq:transform_p}
    \bm{\overline{p}}_\tau^{ij} = \bm S^{ij} \bm R^j_\tau (\bm p_\tau^{i,c} - \bm p_\tau^{j}),
\end{equation}
where $\bm S^{ij} = \text{diag}\{\frac{1}{a^j+r^i}, \frac{1}{b^j+r^i}\}$. Given the state vectors $\bm z^j_\tau$ and $\bm z^i_\tau$ as the working points, the perturbed relevant position $\bm{\Delta \overline{p}}_\tau^{ij}$ can be calculated by:
\begin{equation}\label{eq:perturbedRelevantPosition}
    \bm{\Delta \overline{p}}_\tau^{ij} = \frac{\partial \bm{\overline{p}}_\tau^{ij}}{\partial \bm z_\tau^j} \bm {\Delta z}_\tau^j + \frac{\partial \bm{\overline{p}}_\tau^{ij}}{\partial \bm z_\tau^i} \bm {\Delta z}_\tau^i.
\end{equation}
Plugging (\ref{transformEllipseCenter}), (\ref{transformEllipseCircleCenter}), and (\ref{eq:transform_p}) to (\ref{eq:perturbedRelevantPosition}), the partial derivative of the CAV $n^j$ at time step $\tau$ is
\begin{equation*}
\begin{aligned}
& \frac{\partial \bm{\bar{p}}_\tau^{i j}}{\partial \bm z_\tau^j}=\bm S^{ij} \frac{\partial \bm{R}_\tau^j \bm p_\tau^{i c}}{\partial \bm z_\tau^j}+\bm S^{ij} \frac{\partial \bm R_\tau^j \bm p_\tau^j}{\partial \bm z_\tau^j} \\
& =\bm S^{ij}\left(\left[\begin{array}{cccc}
0 & 0 & -\sin \theta_\tau^j x_\tau^{i,}+\cos \theta_\tau^j y_\tau^{i, c} & 0 \\
0 & 0 & -\cos \theta_\tau^j x_\tau^{i, c}-\sin \theta_\tau^j y_\tau^{i, c} & 0
\end{array}\right]\right. \\
& -\left.\left[\begin{array}{cccc}
\cos \theta_\tau^j & \sin \theta_\tau^j & -\sin \theta_\tau^j x_\tau^j+\cos \theta_\tau^j y_\tau^j & 0\\
-\sin \theta_t^j & \cos \theta_\tau^j & -\cos \theta_\tau^j x_\tau^j-\sin \theta_\tau^j y_\tau^j & 0
\end{array}\right]\right),
\end{aligned}
\end{equation*}
while the partial derivative of the CAV $n^i$ at time step $\tau$ is
\begin{equation*}
\begin{aligned}
\frac{\partial \bm {\bar{p}}_\tau^{i j}}{\partial \bm z_\tau^i} & =\bm S^{ij} \bm R_\tau^j \frac{\partial \bm p_\tau^{i, c}}{\partial \bm z_\tau^i} \\
& =\bm S^{ij} \bm R_\tau^j\left[\begin{array}{cccc}
1 & 0 & \mp d^{i \cdot c} \sin \theta_\tau^i & 0 \\
0 & 1 & \pm d^{i \cdot c} \cos \theta_\tau^i & 0
\end{array}\right],
\end{aligned}
\end{equation*}
where for the front and rear circles of the $i$th CAV, the third column elements in the derivative matrices have opposite signs.
The vehicle is in the safe set $\mathcal{S}^i_\tau$ if ${f}_\mathcal{S} (\bm {\hat{p}}_\tau^j, 
    \bm {\hat{p}}_\tau^{i,c})\ge 0$. 
Therefore, with the new notation of 
\begin{equation}\label{eq:JacobianForEllip}
    \bm J^j_{ic\tau} = \bm k^{ij}_\tau\bm S^{ij} \left(\frac{\partial \bm{R}_\tau^j \bm p_\tau^{i c}}{\partial \bm z_\tau^j}+\frac{\partial \bm R_\tau^j \bm p_\tau^j}{\partial \bm z_\tau^j}\right),
\end{equation}
\begin{equation}\label{eq:JacobianForCircle}
    \bm J^{i,c}_{j\tau} = \bm k^{ij}_\tau\bm S^{ij} \bm R_\tau^j \frac{\partial \bm p_\tau^{i, c}}{\partial \bm z_\tau^i},
\end{equation}
the discriminant of the safety indicator is approximated as:
\begin{equation}
\|\bm S^{ij} \bm R^j_\tau (\bm p_\tau^{i,c} - \bm p_\tau^{j})\|_2+\bm J^j_{ic\tau} \bm \Delta z_\tau^j + \bm J^{i,c}_{j\tau} \bm \Delta z_\tau^i -d_\text{safe} \ge 0,
\label{eq:CollisionAvoidanceLinearization}
\end{equation}
where $j\in \mathcal{N}^i$, $i\in \mathcal{N}^j$, $c\in \mathcal{C}$, and $\tau\in \mathcal{T}^+$. 

Third, the vehicle state and control input constraints w.r.t the perturbed variables $\bm{\Delta z}$ and $\bm{\Delta u}$ can be written as:
\begin{equation}
    \begin{aligned}
        &\bm{\underline{z}}^i-\bm z_\tau^i \leq \bm{\Delta z}_\tau^i \leq \bm{\overline{z}}^i-\bm z_\tau^i,\\
        &\bm{\underline{u}}^i-\bm u_\tau^i \leq \bm{\Delta u}_\tau^i \leq \bm{\overline{u}}^i-\bm u_\tau^i.
    \end{aligned}
\label{eq:stateControlLimitation}
\end{equation}
All the constraints of the original cooperative problem (\ref{NMPCOptProb}) are linearized and transformed by (\ref{eq:DynamicsLinearization}), (\ref{eq:CollisionAvoidanceLinearization}), and (\ref{eq:stateControlLimitation}).

Lastly, the quadratic objective function in (\ref{NMPCOptProb}) w.r.t. the perturbed independent variables $\bm {\Delta z}^i_\tau$ can be derived by the second-order Taylor Expansion, which can be described as:
\begin{equation*}
\begin{aligned}
\bar{Q}^i_\tau(\bm{\Delta z}^i_\tau, \bm{\Delta u}^i_\tau)&\approx Q^i_\tau (\bm{z}^i_\tau,\bm{u}^i_\tau) + \bm{\Delta u}^{i\top}_\tau Q^i_{\tau,u} + \bm{\Delta z}^{i\top}_\tau Q^i_{\tau,z}\\
&+\frac{1}{2}\bm{\Delta u}^{i\top}_\tau Q^i_{\tau,uu}\bm{\Delta u}^{i}_\tau +\frac{1}{2}\bm{\Delta z}^{i\top}_\tau Q^i_{\tau,zz}\bm{\Delta z}^{i}_\tau,
\end{aligned}
\end{equation*}
where $\tau\in \mathcal{T}$, ${Q}^i_\tau( \bm{\Delta z}^i_\tau, \bm {\Delta u}^i_\tau)=Q^i_\tau(\bm{z}^i_\tau+\bm{\Delta z}^i_\tau, \bm u^i_\tau+\bm {\Delta u}^i_\tau)$. In addition, $\bm Q^i_{\tau,{u}}$ and $\bm Q^i_{\tau,{z}}$ are the partial derivatives of $Q^i_\tau$ w.r.t. $\bm{u}^i_\tau$ and $\bm{z}^i_\tau$, while $\bm Q^i_{\tau,{u}{u}}$ and $\bm Q^i_{\tau,{z}{z}}$ are the second-order partial derivatives of $Q^i_\tau$ w.r.t. $\bm{u}^i_\tau$ and $\bm{z}^i_\tau$. Note that the state vector and the control input vector are decoupled, hence the second-order mixed derivatives $\bm Q^i_{\tau,uz} =\bm  Q^i_{\tau,zu} =\bm  0$. With a similar definition of $\bar{Q}_\tau^i$, the terminal cost is 
$$
\bar{Q}^i_T(\bm {\Delta z}^i_T)\approx Q^i_T(\bm z^i_T)+\bm {\Delta z}^{i\top}_T \bm Q^i_{T,z}+\frac{1}{2}\bm{\Delta z}^{i\top}_T Q^i_{T,zz}\bm {\Delta z}^{i}_T.
$$

The following part reformulates the OCP as a consensus optimization problem with a standard consensus ADMM form~\cite{banjac2019decentralized}. To facilitate the description of the independent variables of the cooperative problem, define $\bm{\Delta Z}^i = [\bm{\Delta z}^{i\top}_0, \bm{\Delta u}^{i\top}_0,...,\bm{\Delta z}^{i\top}_T]^\top\in\mathbb{R}^{6T+4}$ as the concatenated vector of all the state vectors and control inputs corresponding to the $i$th CAV. Besides, define $\bm L^i_1=[Q^i_{0,z},Q^i_{0,u},\cdots,Q^i_{T,z}]\in\mathbb{R}^{6T+4}$ as the column vector containing all the first-order Jacobians, and $\bm L^i_2=\textup{blocdiag}\{Q^{i}_{0,zz},Q^{i}_{0,uu},\cdots,Q^{i}_{T,zz}\}\in\mathbb{R}^{(6T+4)\times (6T+4)}$ as the block diagonal matrix of all second-order Hessians. Hence, the perturbed overall host cost is
\begin{equation}
\begin{aligned}
&\bar{F}^i(\bm{\Delta Z}^i)= \bm{\Delta Z}^{i\top}\bm L^i_1+\frac{1}{2}\bm{\Delta Z}^{i\top}\bm L^i_2\bm{\Delta Z}^i.
\end{aligned}
\end{equation}
With a similar expression, the vehicle model (\ref{eq:DynamicsLinearization}) can be described by
\begin{equation}
(\bm L^i_3-\bm L^i_4) \bm{\Delta Z}^i=0,
\label{eq:indicatorLinearizedMatrix}
\end{equation}
where 
$\bm L^i_3=\text{blocdiag}\{[\bm A^i_0\ \bm B^i_0],[\bm A^i_1\ \bm B^i_1],\cdots,[\bm A^i_{T-2}\ \bm B^i_{T-2}],$
$[\bm A^i_{T-1}\ \bm B^i_{T-1}\ \bm{0}^{4\times 4}]\}\in \mathbb R^{(4T)\times(6T+4)}$
and
$L^i_4=\text{blocdiag}\{[\bm{0}^{4\times 4}\ \bm{0}^{4\times 2}\ \bm I^4],[\bm{0}^{4\times 2}\ I^4],\cdots,[\bm{0}^{4\times 2}\ I^4]\}\in \mathbb R^{(4T)\times(6T+4)}$.
Then, we define the solution set of the linearized vehicle model as 
\begin{equation}
    \mathcal{S}^D = \{\bm{\Delta Z}^i|(\bm L^i_3-\bm L^i_4) \bm{\Delta Z}^i=0\}, 
\end{equation}
which is nonempty, closed, and convex.
In conclusion, when combining the indicator function related to $\mathcal{S}^D$ with the cost associated with the host, the resulting expression can be written as:
\begin{equation}
\label{eq:FDef}
F^i(\bm {\Delta Z}^i)= \bar{F}^i(\bm {\Delta Z}^i) + \mathcal I_{\mathcal S^D}(\bm{\Delta Z}^i).
\end{equation}

Afterward, we formulate the inter-vehicle collision avoidance constraints and vehicle physical constraints.
We combine the inequality constraints into another indicator function $\mathcal{I}_{\mathcal{K}}$ corresponding to the feasible set of (\ref{eq:CollisionAvoidanceLinearization}) and (\ref{eq:stateControlLimitation}). 
For inter-vehicle collision avoidance constraints, the Jacobian matrices in (\ref{eq:JacobianForEllip}) and (\ref{eq:JacobianForCircle}) should be concatenated in the following order. For $i$th CAV, its concatenated Jacobian at time step $\tau$ is
\begin{equation}\label{eq:JacobianOrder}
\bm J_\tau^i=\left[\bm a_{j k}^{\top}\right]_{1 \leq j<k \leq N}^{\top}\in \mathbb{R}^{N(N-1)\times 4},
\end{equation}
where according to (\ref{eq:JacobianForEllip}) and (\ref{eq:JacobianForCircle}), $\bm a_{j k}$ can be expressed as
\begin{equation}
\bm a_{j k}=\left\{\begin{array}{ll}
[\bm J_{k 0 \tau}^i ; \bm J_{k i \tau}^i]& j=i \\
{[\bm J_{j \tau}^{i, 0} ; \bm J_{j \tau}^{i, 1}]}& k=i \\
\bm 0^{2\times 4} & \text{otherwise.}
\end{array}\right.
\end{equation}
With the expression of the Jacobian $\bm J^i_\tau$, we define $\bm {\hat{J}}^i = \text{blocdiag}\{[\bm J^i_0\ \bm 0^{N(N-1)\times 2}],\cdots,[\bm J^i_{T-1}\ \bm 0^{N(N-1)\times 2}],\bm J^i_T\}$. 
Further, with a slight abuse of notation, we define the norm vector $\bm k_\tau$ constructed by $\|\bm S^{ij} \bm R^j_\tau (\bm p_\tau^{i,c} - \bm p_\tau^{j})\|_2$ at time step $\tau$ with a similar modality of (\ref{eq:JacobianOrder}), which is
\begin{equation}
\bm k_\tau=\left[\left\|\bm S^{i j} \bm R_\tau^j\left(\bm p_\tau^{i, c}-\bm p_\tau^j\right)\right\|_2\right]_{1 \leq j<i \leq N}\in \mathbb{R}^{N\times (N-1)},
\end{equation}
where $c \in \mathcal C$. Concatenating $\bm k_\tau$ at all time steps, we define $\bm {\hat{k}} = [\bm k_\tau]_{\tau \in \mathcal{T}^+}\in \mathbb{R}^{N(N-1)(T+1)}$.
Then, we rewrite the collision avoidance discriminant (\ref{eq:CollisionAvoidanceLinearization}) of all the CAVs through the planning horizon as:
\begin{equation}
\bm {\hat{k}}+\sum_{i=1}^N \bm {\hat{J}}^i \bm {\Delta Z}^i-\bm d_{\text {safe }}^{N(N-1)(T+1)} \succeq 0.
\end{equation}

For the box constraints in (\ref{eq:stateControlLimitation}), we define a new matrix $\bm {\Tilde{O}^i}$ step by step. We define a new variable $\bm{\Delta e}^i=\left[\bm {\Delta \hat{e}}^{i\top},\Delta v^i_T\right]^\top\in \mathbb R^{3T+1}$ with the notation of $\bm {\Delta \hat{e}}^i = \left[[\Delta v^i_\tau,\Delta \delta^i_\tau,\Delta a^i_\tau]^\top\right]_{i\in\mathcal{T}}$, where $\bm{\Delta e}^i$ denotes the variables with box constraints in the dynamic system (\ref{eq:bicycle_dynamics}). Then $\bm{\Delta e}^i = \bm O^i\bm{\Delta Z^i}$, where
$$
\bm O^i = \text{blocdiag}\{\underbrace{[\bm 0^{3\times 3},\bm I^3],\cdots}_T,[\bm 0^{1\times 3}, 1]\}\in \mathbb B^{(3T+1)\times (6T+4)}.
$$
To facilitate the form $\bm{\Delta e} = \sum_{i=1}^N \bm{\hat{O}}^i\bm{\Delta Z^i}$ required in the dual consensus ADMM algorithm~\cite{grontas2022distributed}, we further define 
\begin{equation}
    \bm{\hat{O}}^i=[\underbrace{\bm{0}^{(3T+1) \times (6T+4)}, \cdots}_{i-1}, \bm O^i, \underbrace{\bm{0}^{(3T+1) \times (6T+4)}, \cdots}_{N-i}],
\label{eq:SparseTildeOi}
\end{equation}
where $\bm{\hat{O}}^i \in \mathbb{B}^{(3T+1)N \times (6T+4)}$. Hence, the discriminant (\ref{eq:stateControlLimitation}) for all the CAVs can be aggregated and rewritten as
\begin{equation}
    \bm{\underline E} - \bm E_\tau \leq \sum_{i=1}^N \bm{\hat{O}}^i\bm{\Delta Z^i} \leq \bm{\overline E} - \bm E_\tau,
\end{equation}
where $\bm E_\tau = [\bm e^i]_{ i\in \mathcal{N}}$, is the concatenated state and control input vectors of all the vehicles at time step $\tau$. $\bm{\underline E}$ and $\bm{\overline{E}}$ denote the lower and upper bounds, respectively.
With the notation $\bm k = [-\bm{\hat{k}}+\bm d_{\text {safe }}^{N(N-1)(T+1)},\bm 0^{(3T+1)N}]\in \mathbb R^{N(N-1)(T+1)+(3T+1)N}$, we formulate the second element of the unconstrained problem by the following two functions 
$$G_1(\bm M_{g1}) = \mathcal{I}_{\mathcal R_+}(\bm M_{g1}-\bm{{k}}),\ G_2(\bm M_{g2}) = \mathcal{I}_{\mathcal S^b}(\bm M_{g2}),$$where the subscripts $\mathcal{R}_+$ denotes a positive semidefinite cone with a $N(N-1)(T+1)$ dimension and $\mathcal{S}^b = \{\bm s|\bm{\underline E} - \bm E_\tau \preceq \bm s \preceq \bm{\overline E} - \bm E_\tau\}$ denotes a continuous convex set with a $(3T+1)N$ dimension, respectively. The independent variables are
$$
\bm M_{g1} = \sum_{i=1}^N \left(\bm {\hat{J}}^i\bm{\Delta Z^i}\right),\ 
\bm M_{g2} = \sum_{i=1}^N \bm{\hat{O}}^i\bm{\Delta Z^i}.
$$
Concatenating $\bm{\hat{J}}^i$ with the upper and lower bound constraints in (\ref{eq:SparseTildeOi}) as $\bm J^i = [\bm{\hat{J}}^i; \bm{\hat{O}}^i]$, we define the concatenated function $$G(\bm M) = G(\bm M_{[1]};\bm M_{[1]}) = \mathcal{I}_{\mathcal{K}}\left(\sum_{i=1}^N \bm J^i\bm {\Delta Z}^i-\bm k\right),$$
where $\bm M = \sum_{i=1}^N \bm J^i\bm {\Delta Z}^i$ and $\mathcal{K} = \{(\bm a;\bm b)|\bm a\in \mathcal{R}_+, \bm b\in \mathcal{S}^b\}$ is the Cartesian product of the two sets $\mathcal{R}_+$ and $\mathcal{S}^b$.
Note that $G(\cdot)$, the sum of two convex functions $G_1(\cdot)$ and $G_2(\cdot)$, is still convex. At this stage, by including $G(\cdot)$ in the overall cost incurred by $F(\cdot)$, with a new notation of $\bm k^i = \bm k/N$, the unconstrained convex optimization problem is given by:
\begin{equation}
\min_{\bm {\Delta Z}^i} \sum_{i=1}^N F^i(\bm {\Delta Z}^i)+\mathcal{I}_{\mathcal{K}}\left(\sum_{i=1}^N \left(\bm J^i\bm {\Delta Z}^i-\bm k^i\right)\right).
\label{eq:unconstrainedOCP}
\end{equation}

\subsection{Improved 
Consensus ADMM for Cooperative Motion Planning with Limited Communication Range}\label{sec:basicDcooperative motion planning}\label{sec:BoostingEfficiency}
In this part, we leverage a parallel optimization framework based on dual consensus ADMM and improve its computational efficiency. For completeness, we restate the preliminaries of dual consensus ADMM~\cite{banjac2019decentralized}, as also summarized in Algorithm~1. We add the relax variable $\bm h$ and adapt the dual formulation of the problem (\ref{eq:unconstrainedOCP}) as:
\begin{equation}
    \begin{aligned}
        \min_{\bm {\Delta Z}^1,...,\bm{\Delta Z}^N} &\sum_{i=1}^N F^i(\bm {\Delta Z}^i)+\mathcal{I}_{\mathcal{K}}\left(\bm h\right)\\
        \text{s.t. } & \sum_{i=1}^N \left(\bm J^i\bm {\Delta Z}^i-\bm k^i\right)=\bm h.
    \end{aligned}
\label{ConstrainedNewProb}
\end{equation}
In this problem, Slater's condition holds under affine constraints, which ensures the strong duality property. The Lagrangian of (\ref{ConstrainedNewProb}) is given by:
\begin{equation}
\begin{aligned}
    \mathcal{L}(\bm{\Delta Z},h,\bm y)&=\sum^N_{i=1} F^i(\bm{\Delta Z}^i)+\mathcal{I}_{\mathcal{K}}(\bm h)\\
    &+\bm y^\top\left(\sum^N_{i=1}\left(\bm J^i \bm{\Delta Z}^i-\bm k^i\right)-\bm h\right),
\end{aligned}
\end{equation}
where $\bm y\in\mathbb{R}^{N(N-1)(T+1)+(3T+1)N}$ represents the dual variable relating to the inter-vehicle collision avoidance constraints and box constraints from the vehicle physics. The Lagrange dual function can be expressed as:
\begin{equation}
\begin{aligned}
    h(\bm y)&=\inf_{\bm{\Delta Z},h}\mathcal{L}(\bm {\Delta Z},\bm h,\bm y)\\
    &=-\sum^N_{i=1}\left(F^{i*}\left(-\bm J^{i\top}\bm y\right)+\mathcal{I}_{\mathcal{K}^{\circ}}(\bm y)+\bm y^\top \bm k^i\right).
\end{aligned}
\end{equation}
Note that $F^{i*}$ represents the conjugates of $F^i$ and $\mathcal{K}^\circ = \left\{\bm y\in \mathbb R^{N(N-1)(T+1)+(3T+1)N}|\text{sup}_{\bm h\in \mathcal{K}}\ \bm y^\top \bm h\leq0\right\}$ is the \textit{polar cone} of the convex cone $\mathcal{K}$. The problem of the Lagrange dual problem, i.e., $$\max_{\bm y}h(\bm y)=\max_{\bm y}\,-\sum^N_{i=1}\left(F^{i*}(-\bm J^{i\top}\bm y)+\mathcal{I}_{\mathcal{K}^{\circ}}(\bm y)+\bm y^\top \bm k^i\right),$$ can be treated as a consensus optimization problem that is decomposed. In this problem, all vehicles collaborate to collectively reach a decision on a shared optimization variable, denoted as $\bm y$. Consequently, the consensus ADMM Algorithm~\ref{alg:alg1} can be applied.

\begin{algorithm}[t]
\caption{Dual Consensus ADMM~\cite{grontas2022distributed}}\label{alg:alg1}
\begin{algorithmic}[1]
\State \textbf{choose} $\sigma,\rho >0$
\State \textbf{initialize} for all $i \in \mathcal{N}$: $p^{i,0}=y^{i,0}=x^{i,0}=s^{i,0}=0$
\State \textbf{repeat}: for all $i \in \mathcal{N}$
\State \hspace{0.3cm} Broadcast $y^{i,k}$ to the vehicles in $\mathcal{N}^i$
\State \hspace{0.3cm} $p^{i,k+1}=p^{i,k}+\rho\sum_{j\in \mathcal{N}^i}(y^{i,k}-y^{j,k})$
\State \hspace{0.3cm} $s^{i,k+1}=s^{i,k}+\sigma(y^{i,k}-x^{i,k})$
\State \hspace{0.3cm} $r^{i, k+1} = \sigma x^{i, k}+\rho \sum_{j\in \mathcal{N}^i}\left(y^{i, k}+y^{j, k}\right)$
\Statex \hspace{1.3cm} $-(k^i+p^{i, k+1}+s^{i, k+1})$
\State \hspace{0.3cm} $z^{i, k+1} = \arg \min \left\{F^i\left(z^i\right)\right.\left.+\gamma\left\|J^i z^i+r^{i, k+1}\right\|^2\right\}$
\State \hspace{0.3cm} $y^{i, k+1} = 2\gamma\left(J^i z^{i, k+1}+r^{i, k+1}\right)$
\State \hspace{0.3cm} $x^{i, k+1} = \Pi_{\mathcal{K}^\circ}\left(\frac{1}{\sigma}s^{i, k+1}+y^{i, k+1}\right)$
\State \hspace{0.3cm} $k=k+1$
\State \textbf{until} termination criterion is satisfied
\end{algorithmic}
\end{algorithm}

Leveraging the methodological results in~\cite{huang2023decentralized}, we are able to get the analytical expressions of the column vectors $\bm p,\bm  s,\bm  r,\bm  y, \bm x\in \mathbb{R}^{N(N-1)(T+1)+(3T+1)N}$ shown in Algorithm~\ref{alg:alg1}. The improved consensus ADMM for cooperative motion planning problem contains the following two steps:
\subsubsection{Dual Update}
The decentralized local problems corresponding to each CAV achieves consensus by dual update process. In Algorithm \ref{alg:alg1}, we define $\gamma = \frac{1}{2(\sigma+2\rho |n^i|)}$.
Steps 5-7 and 9-10 are used to update the dual variables. Note that in Step 5 and Step 7, the $i$th CAV only processes the Jacobian Matrices related to its own neighbors in $\mathcal{N}^i$. All steps are straightforward, except for Step~10. Based on the Moreau decomposition~\cite{bauschke10convex}, the projection operator can be expressed as $\Pi_{\mathcal{K}^\circ}(\bm a) = \bm a - \Pi_{\mathcal{K}}(\bm a).$
Therefore, Step~10 can be rewritten as
\begin{equation}
\bm x^{i, k+1}=\frac{1}{\sigma} \bm s^{i, k+1}+\bm y^{i, k+1}-\bm \lambda^{i, k+1},
\end{equation}
where $\bm \lambda^{i,k+1}$ is an intermediate variable calculated by:
\begin{equation}\label{eq:maxMin}
\begin{aligned}
\bm \lambda^{i, k+1} & =\Pi_{\mathcal{K}}\left(\frac{1}{\sigma}\bm s^{i, k+1}+\bm y^{i, k+1}\right)\\
& = \underset{\bm q \in \mathcal{K}}{\operatorname{argmin}}\left\|\bm y^{i, k+1}+\frac{1}{\sigma} \bm s^{i, k+1}-\bm q\right\| \\
& =\max \left\{\bm \Lambda, \min \left\{\bm \Gamma, \bm y^{i, k+1}+\frac{1}{\sigma} \bm s^{i, k+1}\right\}\right\},
\end{aligned}
\end{equation}
where $k$ means the number of the ADMM iteration.
Based on the definition of the Cartesian product $\mathcal{K}$, we denote the element-wise lower bound and upper bound as $\bm \Lambda = \left[\bm 0^{N(N-1)(T+1)},\bm{\underline E}\right]+\bm \epsilon$ and $\bm \Gamma = \left[\bm \infty^{N(N-1)(T+1)},\bm{\overline E}\right]-\bm \epsilon$, respectively. Here, $\bm \epsilon = [\epsilon]^{N(N-1)(T+1)+(3T+1)N}$ is a vector filled with a small non-negative number $\epsilon$ to push $\bm x^{i,k+1}$ away from the boundary into the feasible region, ensuring strict satisfaction of constraints in the original problem. 
\begin{remark}
    Due to the inherent decoupled nature of this optimization problem with respect to each element of $\bm x$, the solution can be easily obtained through element-wise maximization and minimization as described in (\ref{eq:maxMin}).
\end{remark}


\subsubsection{Primal Update}
Note that Step 8 of Algorithm~\ref{alg:alg1} executes the primal update to explore the optimal states and control inputs for the $i$th CAV. The primal variable $z^{i,k+1}=\bm {\Delta Z}^{i,k+1}$ denotes the augmented state of the $i$th CAV at the $(k+1)$th ADMM iteration. It can be updated by solving a standard LQR problem:
\begin{equation}
\begin{aligned}
    \min_{\bm {\Delta Z^i}}\, &\bm{\Delta Z}^{i\top}\bm L^i_1+\frac{1}{2}\bm{\Delta Z}^{i\top}\bm L^i_2\bm{\Delta Z}^i + 2\bm r^{i,k+1\top}\bm J^i\bm{\Delta Z}^{i}\\
    &+\frac{1}{\sigma+2\rho |n^i|}\bm{\Delta Z}^{i\top}\bm J^{i\top}\bm J^i\bm{\Delta Z}^{i}  \\
    \text{s.t. }\, &\left(\bm L^i_3-\bm L^i_4\right) \bm{\Delta Z}^i=0.
\end{aligned}
\label{eq:quadOptProb}
\end{equation}
It is worth noting that all terms of the objective function in (\ref{eq:quadOptProb}) are quadratic, and the constraint for the vehicle model is linear. In this sense, the problem can be effectively solved using dynamic programming. Besides, the problem solely involves state variables and control inputs that are specific to a single CAV. Consequently, the problem described by (\ref{eq:quadOptProb}) for different CAVs can be concurrently solved in a parallel manner. 
\begin{assumption}\label{fixed_iteration_assumption}
    The maximum iterations for both the outer and inner loops in Algorithm~\ref{alg:alg1} are fixed and the planning horizon in each iteration is constant.
\end{assumption}
\begin{remark}
    Although the LQR problem can be efficiently solved using the algebraic Riccati equation, it still requires dealing with the high-dimensional Jacobian matrix $\bm J$ to obtain its inverse matrix. This process can be time-consuming, especially for large-scale cooperative motion planning problems. With Assumption~\ref{fixed_iteration_assumption}, if all the CAVs in $\mathcal{H}$ are fully connected, the complexity of the Algorithm~\ref{alg:alg1} is $\mathcal{O}(N^3)$, due to Step 5 and Step 7.
\end{remark}

It is evident that the dual variables ${\bm p, \bm s, \bm r,\bm  y,\bm x}\in \mathbb R^{N(N-1)(T+1)+(3T+1)N}$ are constructed corresponding to the inter-vehicle collision avoidance constraints $[\, \cdot \, ]_{[1]}$ with a dimension of $N(N - 1)(T+1)$ and the box constraints  $[\, \cdot \, ]_{[2]}$ with a dimension of $(3T+1)N$. Therefore, separate treatments are logical for these variable sets.

With the limited communication condition, $[\, \cdot \, ]^i_{[1]}$ is calculated with the information from $\mathcal{N}^i$. In other words, the communication range limits the number of the non-zero Jacobian matrices for the inter-vehicle collision avoidance from $N(N-1)(T+1)$ to $N^i(N-1)(T+1)$, while the rest jacobian matrices are filled with $\bm 0$. The number of terms to be summed in Step 5 and Step 7 are converted from $N$ to $N^i$ as well. Assuming $N^i$ is a limited number, we derive the conclusion that the complexity of the calculation of $[\, \cdot \, ]_{[1]}$ is decreased from $\mathcal{O}(N^3)$ to $\mathcal{O}(N)$ compared to the dual consensus ADMM algorithm assuming the fully connected topology network between CAVs~\cite{huang2023decentralized}. 

On the other hand, $[\, \cdot \, ]_{[2]}$ has redundant variables associated with other CAVs. We define $[\, \cdot \,]_{[2,i]}\in \mathbb R^{3T+1}$ as the $\left((i-1)(3T+1)+1\right)$ to $i(3T+1)$ rows of $[\, \cdot \, ]_{[2]}$ pertaining to the box constraints. 
Based on the sparsity of $\bm{\hat O}^i$, we can reduce computational complexity by eliminating redundant entities except $\bm{O}^i$ for $[\, \cdot \,]^i_{[2,i]}$. Assume $i\in \mathcal{N}^v$, as outlined in Algorithm~\ref{alg:alg1}, the dual variables for the $v$th CAV w.r.t. the $i$th CAV are iteratively updated using the following equations:
\begin{subequations}
\begin{align}
\label{subeq:update_p}
\bm{p}_{[2, i]}^{v, k+1} & =\bm{p}_{[2, i]}^{v, k}+\rho \sum_{j\in \mathcal{N}^v}\left(\bm{y}_{[2, i]}^{v, k}-\bm{y}_{[2, i]}^{j, k}\right), \\
\bm{s}_{[2, i]}^{v, k+1} & =\bm{s}_{[2, i]}^{v, k}+\sigma\left(\bm{y}_{[2, i]}^{v, k}-\bm{x}_{[2, i]}^{v, k}\right), \\
\label{subeq:update_r}
\bm{r}_{[2, i]}^{v, k+1} & =\sigma \bm{x}_{[2, i]}^{v, k}+\rho \sum_{j\in \mathcal{N}^v}\left(\bm{y}_{[2, i]}^{v, k}+\bm{y}_{[2, i]}^{j, k}\right)\notag\\
&-\left(\bm k^v_{[2,i]}+\bm{p}_{[2, i]}^{v, k+1}+\bm{s}_{[2, i]}^{v, k+1}\right), \\
\label{subeq:update_y}
\bm{y}_{[2, i]}^{v, k+1} & =\left\{\begin{array}{ll}
2\gamma\left(\bm O^i \bm{\Delta Z}^{i, k+1}+\bm{r}_{[2, i]}^{i, k+1}\right)& v=i \\
2\gamma \bm{r}_{[2, i]}^{v, k+1}& v\neq i,
\end{array}\right.\\
\bm{x}_{[2, i]}^{v, k+1} &= \Pi_{\mathcal{S}^{u\circ}}\left(\frac{1}{\sigma}\bm s^{v, k+1}+\bm y^{v, k+1}\right),
\end{align}
\label{eq:optimizedTargetV}
\end{subequations}

\begin{lemma}\label{theorem1}
    Given that the dual variables of the CAVs are initialized by same values, for the elements of dual variables with $\forall j\neq i, \forall v\neq i$, $\bm \alpha^{j,k}_{[2,i]} = \bm \alpha^{v,k}_{[2,i]} =\bm \alpha^{k}_{[2,i]}, \bm \alpha\in \{\bm p, \bm s, \bm r,\bm  y,\bm x\}, \forall k\in \mathcal{K}_\text{iter}$ holds.
\end{lemma}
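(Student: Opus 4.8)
The plan is to prove Theorem~\ref{theorem1} by induction on the iteration index $k$ over the finite set $\mathcal{K}_\text{iter}$ (finite by Assumption~\ref{fixed_iteration_assumption}), carrying the induction hypothesis that for every pair $j,v\neq i$ one has $\bm{\alpha}^{j,k}_{[2,i]}=\bm{\alpha}^{v,k}_{[2,i]}$ for each $\bm{\alpha}\in\{\bm{p},\bm{s},\bm{r},\bm{y},\bm{x}\}$. The base case $k=0$ is immediate from the common (zero) initialization in Algorithm~\ref{alg:alg1}. For the inductive step I would propagate equality through one sweep of the updates (\ref{subeq:update_p})--(\ref{eq:optimizedTargetV}) in the order $\bm{p}\to\bm{s}\to\bm{r}\to\bm{y}\to\bm{x}$, so that the statement follows for all $k$.

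Before doing that sweep I would record two structural facts that make the non-$i$ blocks behave identically. First, the box-constraint portion of $\bm{k}$ vanishes by construction (recall $\bm{k}=[-\bm{\hat{k}}+\bm{d}_{\text{safe}}^{N(N-1)(T+1)},\bm{0}^{(3T+1)N}]$ and $\bm{k}^v=\bm{k}/N$), hence $\bm{k}^v_{[2,i]}=\bm{0}$ for every $v$ and enters uniformly. Second, the sparsity of $\bm{\hat{O}}^v$ in (\ref{eq:SparseTildeOi}) forces its $[2,i]$ row block to vanish whenever $v\neq i$, which is exactly why the $v\neq i$ branch of (\ref{subeq:update_y}) carries no primal term, so $\bm{y}^{v,k+1}_{[2,i]}=2\gamma\,\bm{r}^{v,k+1}_{[2,i]}$ has no dependence on $\bm{\Delta Z}^v$. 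With these in hand: in (\ref{subeq:update_p}) every summand of $\rho\sum_{j\in\mathcal{N}^v}(\bm{y}^{v,k}_{[2,i]}-\bm{y}^{j,k}_{[2,i]})$ with $j\neq i$ cancels under the hypothesis, leaving only the $j=i$ contribution $\rho(\bm{y}^{k}_{[2,i]}-\bm{y}^{i,k}_{[2,i]})$, which is the same perturbation for every $v\neq i$, so the updated $\bm{p}$ blocks agree; the $\bm{s}$-update in (\ref{eq:optimizedTargetV}) has no neighbor sum and trivially preserves equality; the $\bm{r}$-update (\ref{subeq:update_r}) inherits equality from the already-equal $\bm{x}^{\cdot,k}_{[2,i]}$, $\bm{p}^{\cdot,k+1}_{[2,i]}$, $\bm{s}^{\cdot,k+1}_{[2,i]}$ together with the neighbor sum $\rho\sum_{j\in\mathcal{N}^v}(\bm{y}^{v,k}_{[2,i]}+\bm{y}^{j,k}_{[2,i]})$, which again collapses to a common value plus the single $j=i$ term (here one also uses that $|\mathcal{N}^v|$ is the same for all $v\neq i$); then $\bm{y}^{v,k+1}_{[2,i]}=2\gamma\bm{r}^{v,k+1}_{[2,i]}$ agrees; and finally the $\bm{x}$-update $\Pi_{\mathcal{S}^{u\circ}}(\frac{1}{\sigma}\bm{s}^{v,k+1}+\bm{y}^{v,k+1})$ is a coordinate-wise clip of an argument that is now identical across $v\neq i$, hence its output is identical too. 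This closes the induction and licenses the common vectors $\bm{\alpha}^{k}_{[2,i]}$.

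The step I expect to be the main obstacle is the bookkeeping of the consensus sums $\sum_{j\in\mathcal{N}^v}(\cdot)$ in the $\bm{p}$- and $\bm{r}$-updates: the copy $\bm{y}^{i,k}_{[2,i]}$ held by vehicle $i$ is genuinely \emph{different} from the common value, being the only copy fed by the primal term $\bm{O}^i\bm{\Delta Z}^{i}$, so the whole argument hinges on showing that this one asymmetric contribution enters every $v\neq i$'s update in precisely the same way. This needs vehicle $i$ to be a neighbor of every other vehicle in the subgraph (the standing premise ``$i\in\mathcal{N}^v$'' used in deriving (\ref{subeq:update_p})--(\ref{eq:optimizedTargetV})) and needs the degrees $|\mathcal{N}^v|$ entering (\ref{subeq:update_r}) to be uniform over $v\neq i$ — that is, the subgraph being fully connected within the communication range, which is the regime the graph-evolution step produces; I would state this hypothesis explicitly at the outset. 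The remaining ingredients — the vanishing of $\bm{k}_{[2,i]}$, the sparsity-induced disappearance of the primal term for $v\neq i$, and the coordinate-wise form of $\Pi_{\mathcal{S}^{u\circ}}$ — are routine and require no special care.
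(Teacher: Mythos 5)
Your proof follows essentially the same route as the paper's: induction on the iteration index with the common initialization as the base case, and an inductive step that collapses the neighbor sums in the $\bm p$- and $\bm r$-updates to the single asymmetric $j=i$ contribution (using the sparsity of $\bm{\hat O}^v$ so that no primal term enters for $v\neq i$), then propagates equality through the $\bm y$-update and the coordinate-wise projection in the $\bm x$-update. Your explicit flagging of the hypotheses the paper leaves implicit --- that $i\in\mathcal{N}^v$ for every $v$ and that the degrees $|\mathcal{N}^v|$ are uniform over $v\neq i$, i.e.\ full connectivity within the subgraph, which is what licenses the factor $N-1$ in the paper's simplified update --- is a sharpening of the same argument rather than a different one.
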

\begin{proof}
    Denote the statement of $$P(m):\,\bm \alpha^{j,m}_{[2,i]} = \bm \alpha^{v,m}_{[2,i]}, \bm \alpha\in \{\bm p, \bm s, \bm r,\bm  y,\bm x\},$$and the proof is given by induction on $m$.

    \noindent\textit{Base case:}
    With the same initialization of the dual variables, we have $\bm \alpha^{j,0}_{[2,i]} = \bm \alpha^{v,0}_{[2,i]}=\bm \alpha^{0}_{[2,i]}, \bm \alpha\in \{\bm p, \bm s, \bm r,\bm  y,\bm x\}$, so $P(0)$ is clearly true.

    \noindent\textit{Induction step:}
    Assume the induction hypothesis that $\forall j\neq i, [\, \cdot \, ]^{j,k}_{2,i} = [\, \cdot \, ]^{k}_{2,i}$ holds, meaning $P(k)$ is true. According to (\ref{eq:optimizedTargetV}), for the second term of the equations~(\ref{subeq:update_p}) and (\ref{subeq:update_r}), because $\bm{y}_{[2, i]}^{v, k}=\bm{y}_{[2, i]}^{j, k}=\bm{y}_{[2, i]}^{k}, j\neq i, k\neq i$, we have:
    \begin{equation}\label{eq:sum1}
        \sum_{j\in \mathcal{N}^v}\left(\bm{y}_{[2, i]}^{v, k}-\bm{y}_{[2, i]}^{j, k}\right) = \bm{y}_{[2, i]}^k-\bm{y}_{[2, i]}^{i, k},
    \end{equation}
    \begin{equation}\label{eq:sum2}
        \sum_{j\in \mathcal{N}^v}\left(\bm{y}_{[2, i]}^{v, k}+\bm{y}_{[2, i]}^{j, k}\right) = (2 |n^v|-1) \bm{y}_{[2, i]}^k+\rho \bm{y}_{[2, i]}^{i, k}.
    \end{equation}
Plug (\ref{eq:sum1}) and (\ref{eq:sum2}) to (\ref{eq:optimizedTargetV}) under the case of $v\neq i$, the following equation holds:
\begin{subequations}
\label{eq:finalImprovedBoxConstraintsOfSVinEVOCP}
\begin{align}
\label{eq:sub1}
\bm{p}_{[2, i]}^{v, k+1} & =\bm{p}_{[2, i]}^k+\rho\left(\bm{y}_{[2, i]}^k-\bm{y}_{[2, i]}^{i, k}\right) = \bm{p}_{[2, i]}^{k+1}, \\
\label{eq:sub2}
\bm{s}_{[2, i]}^{v, k+1} & =\bm{s}_{[2,1]}^k+\sigma\left(\bm{y}_{[2, i]}^k-\bm{x}_{[2, i]}^k\right)=\bm{s}_{[2, i]}^{k+1}, \\
\bm{r}_{[2, i]}^{v, k+1} & =\sigma \bm{x}_{[2, i]}^k+\rho(2 |n^v|-1) \bm{y}_{[2, i]}^k+\rho \bm{y}_{[2, i]}^{i, k}\notag  \\
&\label{eq:sub3}
-(\bm{k}_{[2, i]}^{k+1}+\bm{p}_{[2, i]}^{k+1}+\bm{s}_{[2, i]}^{k+1})=\bm{r}_{[2, i]}^{k+1},\\
\label{eq:sub4}
\bm{y}_{[2, i]}^{v, k+1} & =2\gamma \bm{r}_{[2, i]}^{k+1}=\bm{y}_{[2, i]}^{k+1}, \\
\label{eq:sub5}
\bm{x}_{[2, i]}^{v, k+1} & =\Pi_{\mathcal{S}^{b\circ}}\left(\frac{1}{\sigma}\left(\bm{s}_{[2, i]}^{k+1}+\bm{y}_{[2, i]}^{k+1}\right)\right)=\bm{x}_{[2, i]}^{k+1},
\end{align}
\end{subequations}
which means the statement $P(k +1)$ also holds, establishing the induction step from $P(k)$ to $P(k+1)$. Since both the \textit{base case} and the \textit{induction step} have been proved as true, $P(m)$ holds for every $m\in \mathcal{K}_\text{iter}$.
This completes the proof of Theorem~\ref{theorem1}.
\end{proof}

Therefore, the dual update with $v\neq i$ can be completed by (\ref{eq:finalImprovedBoxConstraintsOfSVinEVOCP}). Besides, according to (\ref{eq:optimizedTargetV}) under the case of $v= i$, with Theorem~\ref{theorem1}, the dual update of $\bm \alpha_{2,i}^{i,k+1}$ can be simplified as:
\begin{subequations}
\label{eq:finalOptimizedTargetV}
\begin{align}
\label{subeq:imp1}
\bm{p}_{[2, i]}^{i, k+1} & =\bm{p}_{[2, i]}^{i, k}+\rho(N-1)\left(\bm{y}_{[2, i]}^{i, k}-\bm{y}_{[2, i]}^k\right), \\ 
\label{subeq:imp2}
\bm{s}_{[2, i]}^{i, k+1} & =\bm{s}_{[2, i]}^{i, k}+\sigma\left(\bm{y}_{[2, i]}^{i, k}-\bm{x}_{[2, i]}^{i, k}\right), \\ 
\bm{r}_{[2, i]}^{i, k+1} & =\sigma \bm{x}_{[2, i]}^{i, k}+\rho(N-1)\left(\bm{y}_{[2, i]}^{i, k}+\bm{y}_{[2, i]}^k\right)\notag\\
\label{subeq:imp3}
&-(\bm{k}_{[2, i]}^{i, k+1}+\bm{p}_{[2, i]}^{i, k+1}+\bm{s}_{[2, i]}^{i, k+1}), \\
\label{subeq:imp4}
\bm{y}_{[2, i]}^{i, k+1} & =2\gamma\left(\bm O^i \bm{\Delta Z}^{i, k+1}+\bm{r}_{[2, i]}^{i, k+1}\right), \\
\label{subeq:imp5}
\bm{x}_{[2, i]}^{i, k+1} & =\Pi_{\mathcal{S}^{b\circ}}\left(\frac{1}{\sigma}\left(\bm{s}_{[2, i]}^{i, k+1}+\bm{y}_{[2, i]}^{i, k+1}\right)\right) .
\end{align}
\end{subequations} 
\begin{algorithm}[t]
\caption{Improved Consensus ADMM for Cooperative Motion Planning Within One Subgraph $\mathcal H(\mathcal{N},\mathcal{E}_h)$}\label{alg:alg2}
\begin{algorithmic}[1]
\State \textbf{initialize} $\{x^i_\tau,u^i_\tau\}^T_{\tau=0},\{p^{i,0},y^{i,0},z^{i,0},s^{i,0}\}, \forall i\in \mathcal{N}$
\State \textbf{choose} $\sigma,\rho >0$
\State \textbf{repeat}:
\State \hspace{0.3cm} Send $\{z^i_\tau\}_{\tau=1}^T$, receive $\{z^j_\tau\}_{\tau=1}^T$ from $j\in\mathcal{N}^i$
\State \hspace{0.3cm} Compute $k^i$, $J^i$, $\{A^i_\tau\}^{T-1}_{\tau=0}$, $\{B^i_\tau\}^{T-1}_{\tau=0}$
\State \hspace{0.3cm} \textbf{reset} $p^{i,0}=s^{i,0}=0, y^{i,0}=y^\text{last},x^{i,0}=x^\text{last}$
\State \hspace{0.3cm} \textbf{reset} $y^{i,0}_{[2]} = x^{i,0}_{[2]}=0$
\State \hspace{0.3cm} \textbf{repeat}: for all $i \in \mathcal{N}$
\State \hspace{0.6cm} Send $y^{i,k}$, receive $y^{j,k}$ from $j\in\mathcal{N}^i$
\State \hspace{0.6cm} Steps 4-7 of Algorithm~\ref{alg:alg1} for $[\, \cdot \,]^{i,k}_{[1]}$
\State \hspace{0.6cm} Perform (\ref{subeq:imp1}-\ref{subeq:imp3}) for $ \alpha^{i,k}_{[2,i]}, \alpha\in\{ p, s,  r\}$
\State \hspace{0.6cm} Perform (\ref{eq:sub1}-\ref{eq:sub3}) for $ \alpha^{i,k}_{[2,j]},\alpha\in\{ p, s,  r\},j\in\mathcal{N}^i$
\State \hspace{0.6cm} Compute $z^{i, k+1}$ by solving LQR problem (\ref{eq:quadOptProb})
\State \hspace{0.6cm} $y^{i, k+1}_{[1]} = 2\gamma\left(\hat J^i z^{i, k+1}_{[1]}+r^{i, k+1}_{[1]}\right)$
\State \hspace{0.6cm} ${x}_{[1]}^{i, k+1} = \Pi_{\mathcal{R}^{\circ}_+}\left(\frac{1}{\sigma}s^{i, k+1}_{[1]}+ y^{i, k+1}_{[1]}\right)$
\State \hspace{0.6cm} Perform (\ref{subeq:imp4}-\ref{subeq:imp5}) for $ \alpha^{i,k}_{[2,i]}, \alpha\in\{ y, x\}$
\State \hspace{0.6cm} Perform (\ref{eq:sub4}-\ref{eq:sub5}) for $\alpha^{i,k}_{[2,j]}, \alpha\in\{ y,x\},j\in\mathcal{N}^i$
\State \hspace{0.6cm} $k=k+1$
\State \hspace{0.3cm} \textbf{until} number of iteration steps exceeds $k_\text{max}$
\State \hspace{0.3cm} Update $\{z^i_\tau,u^i_\tau\}^T_{\tau=0}$
\State \textbf{until} termination criterion is satisfied
\end{algorithmic}
\end{algorithm}

\begin{remark}
    Combining (\ref{eq:finalImprovedBoxConstraintsOfSVinEVOCP}) and (\ref{eq:finalOptimizedTargetV}), the decoupled ADMM iteration of variables corresponding to the box constraints of vehicle $i$ is completed. Therefore, compared to (\ref{eq:optimizedTargetV}), the computational complexity is reduced obviously from $\mathcal O(N^i)$ to $\mathcal O(1)$, which improves the computational efficiency but does not affect the optimization performance. 
\end{remark}

In summary, combining the solving process of $[\, \cdot \, ]_{[1]}$ and $[\, \cdot \, ]_{[2]}$, an improved decentralized cooperative motion planning algorithm with a computation complexity of $\mathcal{O}(N)$ is provided. As demonstrated in Algorithm~\ref{alg:alg2}, it starts by initializing the dual variables and choosing values for hyper-parameters. It then enters a loop where it sends and receives information among the CAVs in each $\mathcal{N}^i$. It computes the augmented variables based on the received neighbor information and performs computations for each time step $\tau$. Inside the loop, it also performs additional computations and updates the primal variables until a termination criterion is satisfied. The inner loop continues to iterate until a maximum number of steps is reached. Finally, it updates the state vectors and control inputs along the whole planning horizon until the termination criterion of the outer loop is satisfied.
\section{Cooperative Motion Planning Framework for Large-Scale CAVs}
This section focuses on improving the practicality of the proposed algorithm toward cooperative motion planning with large-scale CAVs under a {locally connected topology network}. We leverage graph evolution and receding horizon strategies to achieve this objective.

\subsection{Graph Evolution based on Manhattan Distance}\label{subsec:TaskAllocationNode}
\begin{algorithm}[t]
\caption{Graph Evolution for CAVs}\label{alg:NodeDistribution}
\begin{algorithmic}[1] 
\State \textbf{input}: current positions $p^i_{T_e},\, i\in \mathcal{V}$
\State \textbf{initialize}: distance matrix $D = [d^{i,j}], i,j\in \mathcal{V}$ 
\State \textbf{reset}: threshold distance $d_\text{safe}^{i,j}=\infty$
\If{$\Delta\theta < 45^\circ$}
\State $d_\text{safe}^{i,j}= T_s \cdot \max\{v_\text{ref}^i, v_\text{ref}^j\}$
\Else
\State $d_\text{safe}^{i,j}=T_s\cdot (v_\text{ref}^i+v_\text{ref}^j)$
\EndIf
\State \textbf{build}: adjacency matrix $A_\mathcal{G}$ with condition $d^{i,j}<d_\text{safe}^{i,j}$
\State \textbf{initialize}: subgraphs list, visited list
\For{$i$ in range (len($D[0]$))} 
\If{visited[$i$] is 0}
\State Create subgraph with $i$ 
\State Mark $i$ as \texttt{visited} 
\State Initialize queue with $i$ 
\While{queue} 
\State Get node and neighbors from queue 
\For{neighbor in neighbors} 
\If{visited[neighbor] is 0} 
\State Add neighbor to subgraph and queue 
\State Mark neighbor as \texttt{visited}  
\EndIf 
\EndFor 
\EndWhile 
\State Perform (\ref{eq:edgeCreation}) to create the edge set $\mathcal{E}_h$
\State Add subgraph $\mathcal{H}$ to subgraphs list
\EndIf 
\EndFor 
\end{algorithmic} 
\end{algorithm}
To enable efficient decentralized cooperative motion planning for CAVs, a graph evolution strategy is presented incorporating node distribution algorithm and edge construction rules. The node distribution mechanism is demonstrated in Algorithm~\ref{alg:NodeDistribution}. Leveraging the notations in Section \ref{sec:graphTheory}, each node $n^i$ is distributed into a subset of nodes $\mathcal{N({H})}$ from $\mathcal{V(G)}$, 
which belong to two graphs $\mathcal{H}(\mathcal{N},\mathcal{E}_h)$ and $\mathcal {G(V,E)}$, respectively.
Their relationship can be described as:
\begin{equation}
    \mathcal{H}(\mathcal{N}, \mathcal{E}_h)\subseteq \mathcal{G}(\mathcal{V}, \mathcal{E}).
\end{equation}
The node distribution algorithm aims to establish the communication and collaboration among CAVs with certain discrimination rules, referred to as the threshold distance  $d_\text{safe}^{i,j}$. To divide the graph into as many subgraphs as possible under the inter-subgraph safety guarantee for the node distribution strategy, we propose a \textit{safe distance} by the following conditional Manhattan distance:

\begin{equation}
d_{\text {safe }}^{i, j}=\left\{\begin{array}{l}
T_s \cdot \max \left\{v_{\text {ref }}^i, v_{\text {ref }}^j\right\}, \Delta\theta<\frac{\pi}{4} \\
T_s \cdot\left(v_{\text {ref }}^i+v_{\text {ref }}^j\right), \text { otherwise.}
\end{array}\right.
\end{equation}
Vehicles with a Manhattan distance $d^{i,j}=\|\bm p^i - \bm p^j\|_1$ larger than the safe distance have no potential to collide with each other, which is explained by the following statements.

If $\Delta \theta < \frac{\pi}{4}$, the two CAVs appear to be driving in the same direction. In this case, the situation of the CAVs can be treated as a \textit{Pursuit Problem}, where the relative velocity $\Delta v^{ij} = v^i-v^j<\max\{v^i, v^j\}$. Therefore, the relative displacement of the CAVs can be expressed as 
\begin{equation}
    \Delta d^{ij} =T_s\cdot  \Delta v^{ij} < T_s\cdot \max\{v^i, v^j\} .
\end{equation}
    
If $\Delta \theta \geq \frac{\pi}{4}$, the two CAVs are either \textit{driving vertically} or \textit{toward each other}. 
In the sub-case of \textit{driving vertically}, a necessary and insufficient condition for the possibility of collision is given by \textit{Condition 1}: 
\begin{itemize}
    \item $v^i_\text{ref}\times T_s \geq \Delta x^{ij}_\tau$ and $v^j_\text{ref}\times T_s \geq \Delta y^{ij}_\tau$, or
    \item $v^j_\text{ref}\times T_s \geq \Delta x^{ij}_\tau$ and $v^i_\text{ref}\times T_s \geq \Delta y^{ij}_\tau$
\end{itemize}
For \textit{Condition 1}, an equivalent necessary and insufficient condition is listed as \textit{Condition 2}: 
\begin{itemize}
    \item $v^i_\text{ref}\times T_s + v^j_\text{ref}\times T_s \geq \Delta x^{ij}_\tau+\Delta y^{ij}_\tau =\|\bm p^i - \bm p^j\|_1$
\end{itemize}
Hence, \textit{Condition 2} is also a necessary and insufficient condition for collision avoidance. 
In the sub-case of \textit{driving toward each other}, the situation of the CAVs can be treated as an Encounter Problem, where the relative velocity $\Delta v^{ij} \leq v^i_\text{ref}+v^j_\text{ref}$. Therefore, the relative displacement of the CAVs can be expressed as $\Delta d^{ij} = \Delta v^{ij}\times T_s \leq (v^i_\text{ref}+v^j_\text{ref}) \times T_s$. If $\Delta d^{ij} \leq \|(x^i_\tau,y^i_\tau) - (x^j_\tau,y^j_\tau)\|_1$, there is no possibility of encountering with a time horizon $T_s$, which means 
\begin{equation}
    \|(x^i_\tau,y^i_\tau) - (x^j_\tau,y^j_\tau)\|_1 > (v^i_\text{ref}+v^j_\text{ref}) \times T_s.
\end{equation}
At this stage, all the cases are discussed the safety guarantee of conditional Manhattan distance is proved completely.
\begin{remark}
    In urban traffic systems, roads are typically vertically staggered, and vehicles navigate based on the topology of the road network. Hence, we motivate the use of Manhattan distance to measure the safe range between CAVs. Each CAV has a target velocity, which is generally maintained for extended periods. Therefore, the product of the target velocity and the planning horizon can serve as a criterion for establishing a safety distance between any pair of CAVs.
\end{remark}
As shown in Algorithm~\ref{alg:NodeDistribution}, a distance matrix $\bm D$ is firstly initialized to store the pairwise conditional Manhattan distances between $n^i \in \mathcal{V}$. The adjacency matrix $\bm A_\mathcal{G}$ is then constructed based on an element-wise comparison of the distance between two CAVs that is smaller than the safe distance i.e., $(n^i,n^j) = d^{i,j}< d_\text{safe}^{i,j}$. Hence, $\bm A_\mathcal{G}$ represents the connectivity between CAVs.
Next, the algorithm initializes the subgraphs and a visited array to keep track of visited CAVs. It iterates through each CAV in $\mathcal{V}$ and checks if it has been visited. If not, a new subgraph $\mathcal{H}$ is created with the current CAV as the starting node. The node is marked as visited and a queue is initialized with the current CAV.
Within the ``while'' loop, the algorithm retrieves a node $n^j$ from the queue along with its neighbors. For each neighbor that has not been visited, it is added to the subgraph $\mathcal{H}$ and the queue. The neighbor is also marked as visited. This process continues until the queue is $\bm \emptyset$.
Finally, the algorithm create the edges of each subgraph and store it to the list of subgraphs $\mathcal{H}^1,\mathcal{H}^2,\cdots,\mathcal{H}^K$. This process repeats until all CAVs have been visited, resulting in a collection of subgraphs. Within one set of nodes created by Algorithm~\ref{alg:NodeDistribution}, the edges to construct the subgraph $\mathcal{H}(\mathcal{N},\mathcal{E}_h)$ between node pairs are created by:
\begin{equation}\label{eq:edgeCreation}
    \mathcal{E}_h = \left\{(n^i,n^j),j\in\mathcal{N}|d^{i,j}\leq r_\text{tele}^i \right\}.
\end{equation}
\begin{remark}
    Algorithm~\ref{alg:NodeDistribution} provides a foundation for the construction of decentralized cooperative motion planning problems by maintaining the adjacency matrix $\bm A_{\mathcal{G}}$ among CAVs. These sets of nodes $\mathcal{N}^k$ serve as the basis for CAV coordination in the subsequent stages of the motion planning process.
\end{remark}

\subsection{Large-Scale Cooperative Motion Planning with
Receding Horizon and Graph Evolution}\label{subsec:SuccCoMotionPlanningStr}
The pipeline of the proposed cooperative motion planning framework involves optimizing the trajectories of the CAVs in a planning horizon of $T_s$ and executing only the first $T_e$ steps. We set $T_s> T_e$, which can cater to uncertainties and dynamics faults during the $(T_s-T_e)$ time horizon constructing a closed-loop structure. Besides, the horizon of the whole driving process from the starting point to the destination is denoted as $T$. 
With a receding horizon manner, it is not necessary to detect vehicles that are far away, because the maximum velocity of each CAV is limited, which defines a boundary for the trip distance within the limited planning horizon. This feature benefits the graph evolution algorithm to construct more subgraphs to effectively control the scale of the OCPs corresponding to each subgraph.

\begin{algorithm}[t] \caption{Large-Scale Cooperative Motion Planning with Receding Horizon and Graph Evolution}\label{alg:alg3}
\begin{algorithmic}[1] 
\State \textbf{obtain} starting and target positions of all CAVs in $\mathcal{G}$
\State \textbf{determine} optimal route using A* algorithm 
\State \textbf{smooth} the waypoints in each route using the Savitzky-Golay filter
\While{termination criterion is not satisfied}: 
\State Execute Algorithm~\ref{alg:NodeDistribution} to distribute the CAVs to $\mathcal{H}^k$
\For{subgraph $\mathcal{H}$ in subgraphs list} 
\State \textbf{formulate} the OCP (\ref{ConstrainedNewProb}) for the CAVs in $\mathcal{H}$
\State \textbf{search} nearest waypoints for each CAV at each time step $\tau\in \{0,1,...,T_s\}$ using KD-Tree
\State Execute Algorithm \ref{alg:alg2} to obtain CAVs' trajectories
\EndFor
\State \textbf{perform} first $T_e$ steps of the CAVs in $\mathcal{G}$
\State \textbf{feedback} at time step $T_e$
\State \textbf{relay} the states of all the CAVs in $\mathcal{G}$
\EndWhile
\end{algorithmic} 
\end{algorithm}

The cooperative motion planning strategy with a receding horizon is presented in Algorithm~\ref{alg:alg3}. Initially, the starting and target positions of all CAVs are determined by users. A preliminary route with guidance waypoints is generated and smoothed based on the road topology provided by the OpenDRIVE map, using the A* algorithm and the Savitzky-Golay filter. In the ``while" loop, the graph evolution is performed using Algorithm~\ref{alg:NodeDistribution}, and OCPs are formulated within each subgraph. To facilitate cooperative maneuvers, the nearest neighbor search is performed using an efficient KD-Tree structure within the subsequent $T_s$ steps. After formulating the decentralized cooperative motion planning optimization problem and parallelly executing Algorithm 2 for each subgraph $\mathcal{H}$, the trajectories of all CAVs are obtained. The first $T_e$ trajectories of the CAVs are executed, and the state and environment feedback at time step $T_e$ is collected and relayed between the CAVs. Simultaneously, $\mathcal{G}$ is updated by Algorithm~\ref{alg:NodeDistribution}. The aforementioned steps are repeated to obtain the remaining trajectories during $T_e$ to $T$.

\section{Simulation Results}\label{sec:simulation}

\subsection{Environment Setup}\label{subsec:expSetup}

We implement the proposed efficient decentralized cooperative motion planning framework in Ubuntu 20.04 LTS with Intel(R) Xeon(R) Gold 6230 CPU @ 2.10GHz CPU and NVIDIA TITAN RTX GPU with 256 GB RAM and $24\times 4$ GB Graphic Memory. We perform the algorithms using Python 3.8 with Numba acceleration. The simulation platform is CARLA~0.9.14~\cite{dosovitskiy2017carla} with \texttt{Town05}.
The pertinent parameters of the CAVs are depicted in Table~\ref{tab:vehicleParams}. It is notable that $d^{i,1}=-0.28$~m means that the rear circle of the target vehicle is in front of the virtual center (center of the rear axle) of the vehicle model~(\ref{eq:bicycle_dynamics}).
\begin{table}[h]
  \centering
  \caption{Parameter Settings of the CAVs}
  \resizebox{1.0\linewidth}{!}{
    \begin{tabular}{cccccc}
    \toprule
    Param. & Description & Value & Param. & Description & Value \\
    \midrule
     $b$    & wheelbase & 2.4 m & $d^{i,1}$    & rear distance  & -0.28 m \\
     $l$    & vehicle length & 3.8 m & $a^j$     & semi-major  & 3 m \\
    $w$     & vehicle width  & 1.7 m & $b^j$     & semi-minor  & 1.1 m \\
    $d^{i,0}$    & front distance  & 2.68 m & $r^i$     & radius  & 2.55 m \\
    \bottomrule
    \end{tabular}%
    }
  \label{tab:vehicleParams}%
\end{table}%
In the remainder of this section, we conduct two sets of simulation experiments to evaluate the computational efficiency of the proposed optimization algorithm and the driving performance of the proposed decentralized cooperative motion planning strategy, respectively.

We use the same guidance trajectories of the CAVs in the same map for all the benchmark methods in the comparative experiment. Concretely, as shown in Table~\ref{tab:paramsForCavGeneration}, we adopt different hyper-parameters for the reference trajectories generation of different scales of CAVs. The CAVs' spawn points $\bm z^i_{\text{ref},0}$ and destinations $\bm z^i_{\text{ref},T}$ are randomly generated. Afterward, the rough guidance trajectories containing waypoints $z^i_{\text{ref},\tau}, \tau\in [1,2,...,T]$ are searched as described by Section~\ref{guidanceTrajGen}. The generated spawn points are within a range of $[d_\text{min},d_\text{max}]$ from $(-188\,\text{m},-91.5\,\text{m})$, the center position of an intersection in \texttt{Town 05}, while the destinations of CAVs are within a range of $[d_\text{des,min},d_\text{des,max}]$ from their spawn points $z^i_{\text{ref},0}$. Lastly, to simulate real urban driving scenarios, the target velocity of each CAV $v^i_{\text{ref}}$ is randomly assigned between $[v_\text{min}, v_\text{max}]$. 
\begin{table}[h]
  \centering
  \caption{Parameters for Driving Scenario Construction}
    \begin{tabular}{ccccccc}
    \toprule
    \#\,CAV  & $d_\text{min}$ & $d_\text{max}$ & $d_\text{des,min}$ & $d_\text{des,max}$ & $v_\text{min}$ & $v_\text{max}$\\
    \midrule
    8     & 7.5\,m   & 30.0\,m    & 100.0\,m   & 150.0\,m   & 10.0\,m/s     & 10.0\,m/s \\
    16    & 7.5\,m   & 70.0\,m    & 100.0\,m   & 150.0\,m   & 5.0\,m/s     & 20.0\,m/s \\
    32    & 10.0\,m    & 140.0\,m   & 120.0\,m   & 150.0\,m   & 5.0\,m/s     & 20.0\,m/s \\
    80    & 10.0\,m    & 340.0\,m   & 140.0\,m   & 170.0\,m   & 5.0\,m/s     & 20.0\,m/s \\
    \bottomrule
    \end{tabular}%
  \label{tab:paramsForCavGeneration}%
\end{table}%
\subsection{Cooperative Motion Planning Under One Subgraph}\label{subsec:computationalEfficiency}
\subsubsection{Computational Efficiency Comparison}
We compare our optimization algorithm for solving cooperative motion planning problems with existing commonly used solvers~(the IPOPT solver and the SQP solver provided by CasADi~\cite{andersson2019casadi}) and advanced algorithms~(centralized iLQR (CiLQR) with log barrier function~\cite{chen2019autonomous}, and decentralized iLQR (DiLQR) with soft collision avoidance~\cite{huang2023decentralized}). The hyper-parameters of the optimization algorithms are listed in Table~\ref{tab:optHyperParams}. 
\begin{table}[htbp]
  \centering
  \caption{Parameter Settings of the Optimization Algorithms}
    \begin{tabular}{cc|cc|cc}
    \toprule
    Parameter & Value & Parameter & Value & Parameter & Value \\
    \midrule
    $\sigma$ &   0.05    & $\tau_s$ &  0.1~s     & $\varphi_b$ & 0.6~rad \\
    $\rho$ &   0.002    & $a_\text{max}$ &   3.0~m/s$^2$   & $T_s$    &   15 \\
    $\epsilon$    &   0.1    & $a_\text{min}$ & -5.0~m/s$^2$      &$T_e$ & 10   \\
    \bottomrule
    \end{tabular}%
  \label{tab:optHyperParams}%
\end{table}%

Computation time of different optimization horizons ($T$) and CAV numbers ($N$) are compared using a static CAV graph $\mathcal{G}$ without graph evolution. As depicted in Table~\ref{tab:compTimeEfficiency_T}, we compare the computation time of various optimization algorithms with a planning horizon ($T_s$) ranging from 30 to 90 and a fixed CAV number ($N=8$). 
IPOPT and SQP, typical solving methods in CasADi, have high computation time and low efficiency for CAVs' motion planning tasks. SQP struggles with planning horizons greater than 40. CiLQR is more time-efficient due to iterative linearization but faces significantly increased computation time as the planning horizon grows. In contrast, DiLQR, which uses ADMM to decentralize the problem, maintains efficiency even with an increased planning horizon scale. Our proposed method has also high computational efficiency with a larger planning horizon.
From another perspective, as shown in Table~\ref{tab:compTimeEfficiency_N}, for most of the compared methods, a higher number of CAVs has a greater impact on the computational efficiency than a greater planning horizon. In particular, DiLQR is much less scalable than our improved consensus ADMM algorithm.

\begin{table}[htbp]
  \centering
  \caption{Computation Time with Different Optimization Horizons}
  \resizebox{1.0\linewidth}{!}{
    \begin{tabular}{cccccccr}
    \toprule
    \multirow{2}[4]{*}{Method} & \multicolumn{7}{c}{Optimization horizon with $N$ = 8} \\
\cmidrule{2-8}          & 30    & 40    & 50    & 60    & 70    & 80    & \multicolumn{1}{c}{90} \\
    \midrule   
    CiLQR~\cite{chen2019autonomous} &0.276	&0.361	&0.577	&1.314	&2.031	&1.413	&2.099\\
    IPOPT~\cite{andersson2019casadi} &4.311&	8.701&	18.602&	31.630&	95.909&---&---\\
    SQP~\cite{andersson2019casadi}   &758.680&---    &---    &---    &---    &---    &---\\
    DiLQR~\cite{huang2023decentralized} &0.398&	0.339&	0.614&	0.582&	0.713&	1.069&	1.473\\
    Ours  & 0.197&	0.377&	0.407&	0.550&	0.733&	1.035	&1.434 \\
    \bottomrule
    \end{tabular}%
    }
  \label{tab:compTimeEfficiency_T}%
\end{table}%

\begin{table}[htbp]
  \centering
  \caption{Computation Time with Different Numbers of CAVs}
  \resizebox{1.0\linewidth}{!}{
    \begin{tabular}{cccccccrr}
    \toprule
    \multirow{2}[4]{*}{Method} & \multicolumn{8}{c}{Number of CAVs with $T$ = 30} \\
\cmidrule{2-9}          & 4     & 8     & 12    & 16    & 20    & 24    & \multicolumn{1}{c}{28} & \multicolumn{1}{c}{32} \\
    \midrule
    CiLQR~\cite{chen2019autonomous} &0.157&	0.276&	1.763&	6.001&	16.901&	15.888&	19.395&	20.110\\
    IPOPT~\cite{andersson2019casadi} &0.935&	4.311&	130.961 &173.720&314.550& ---   &  ---  &---\\
    SQP~\cite{andersson2019casadi}   &124.320&758.680&---    &---    &---    &---    & ---   &---\\
    DiLQR~\cite{huang2023decentralized}&0.073&	0.398&	0.301&	0.490&	1.096&	3.343&	5.146&	5.419\\
    Ours &0.152&	0.197&	0.565&	0.792&	0.364&	1.159&	1.887&	2.522
\\
    \bottomrule
    \end{tabular}%
    }
  \label{tab:compTimeEfficiency_N}%
\end{table}%

\subsubsection{Driving Performance Analysis}
\begin{figure}[t]
    \centering
    \includegraphics[width=0.8\linewidth]{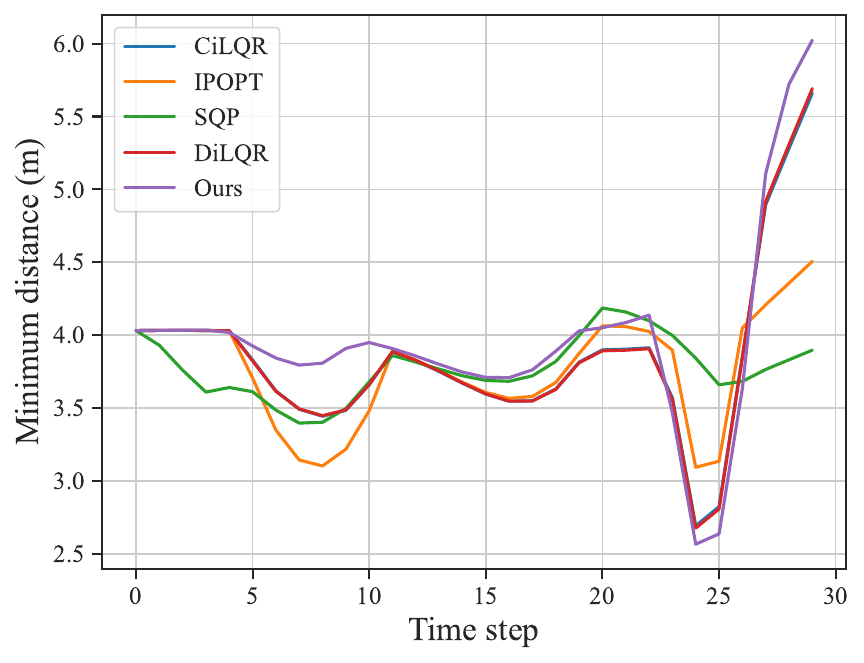}
    \caption{The minimum distance between all the CAVs at each time step. Different colors represent different methods. All the above methods performed safe cooperative motion planning with a minimum distance greater than 2.5~m.}
    \label{fig:efficiency_comp_min_dis}
\end{figure}

\begin{figure}[t]
    \centering
    \includegraphics[width=0.8\linewidth]{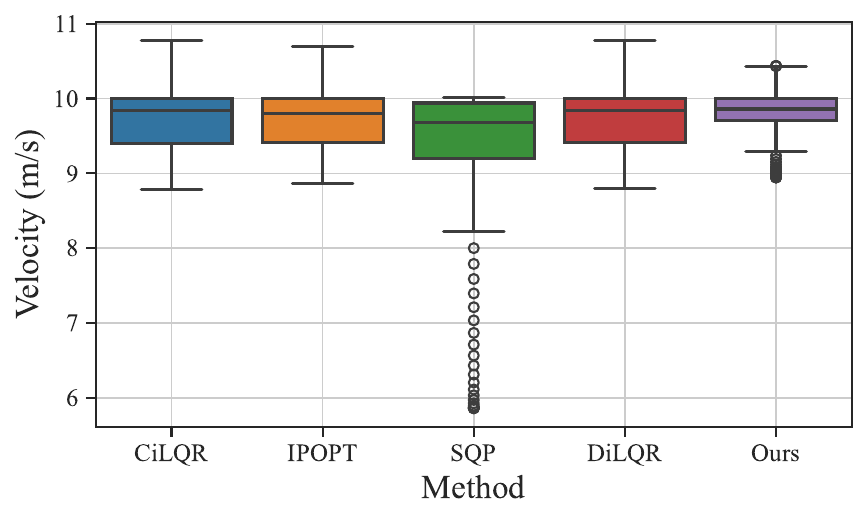}
    \caption{Velocity distribution along all the time steps with different methods. As the reference velocity is 10 m/s, a speed distribution closer to this value indicates better driving performance.}
    \label{fig:efficiency_comp_vel_distribution}
\end{figure}

As the cooperative motion planning problem (\ref{NMPCOptProb}) is non-convex, it is challenging to guarantee finding the global optimal solution using any solver. Therefore, it becomes necessary to establish evaluation criteria to assess the optimization performance. In this paper, the motion planning performance is evaluated based on two criteria: the minimum distance between any pair of CAVs and the velocity distribution throughout the entire driving process.

As depicted in Fig.~\ref{fig:efficiency_comp_min_dis}, all the methods exhibit a similar trend and successfully achieve safe driving performance in terms of the minimum distance between CAVs. However, when considering driving efficiency, notable differences arise among the methods. As demonstrated in Fig.~\ref{fig:efficiency_comp_vel_distribution}, the SQP method exhibits the largest deviation and variance from the reference velocity, with the lowest mean velocity among all the alternatives. 
Compared with other algorithms, our proposed method demonstrates minimal variance, coupled with a relatively high average velocity.
In summary, our proposed cooperative optimization algorithm outperforms other methods in terms of scalability, safety, and traveling efficiency. It showcases robustness against an increase in the planning horizon and the number of CAVs. 

Note that when the number of CAVs involved in one OCP is more than 20, the cooperative optimization problem consumes more than $1.0$~s, which is not sufficient for real-time urban cooperative driving tasks. 
Hence, our proposed graph evolution algorithm is tailored for the context of large-scale cooperative motion planning, which we thoroughly examine in the subsequent part.

\subsection{Large-Scale Cooperative Motion Planning}
\begin{figure*}[t]
\centering
\subfigure[Initial trajectories]{
\includegraphics[width=0.47\linewidth]{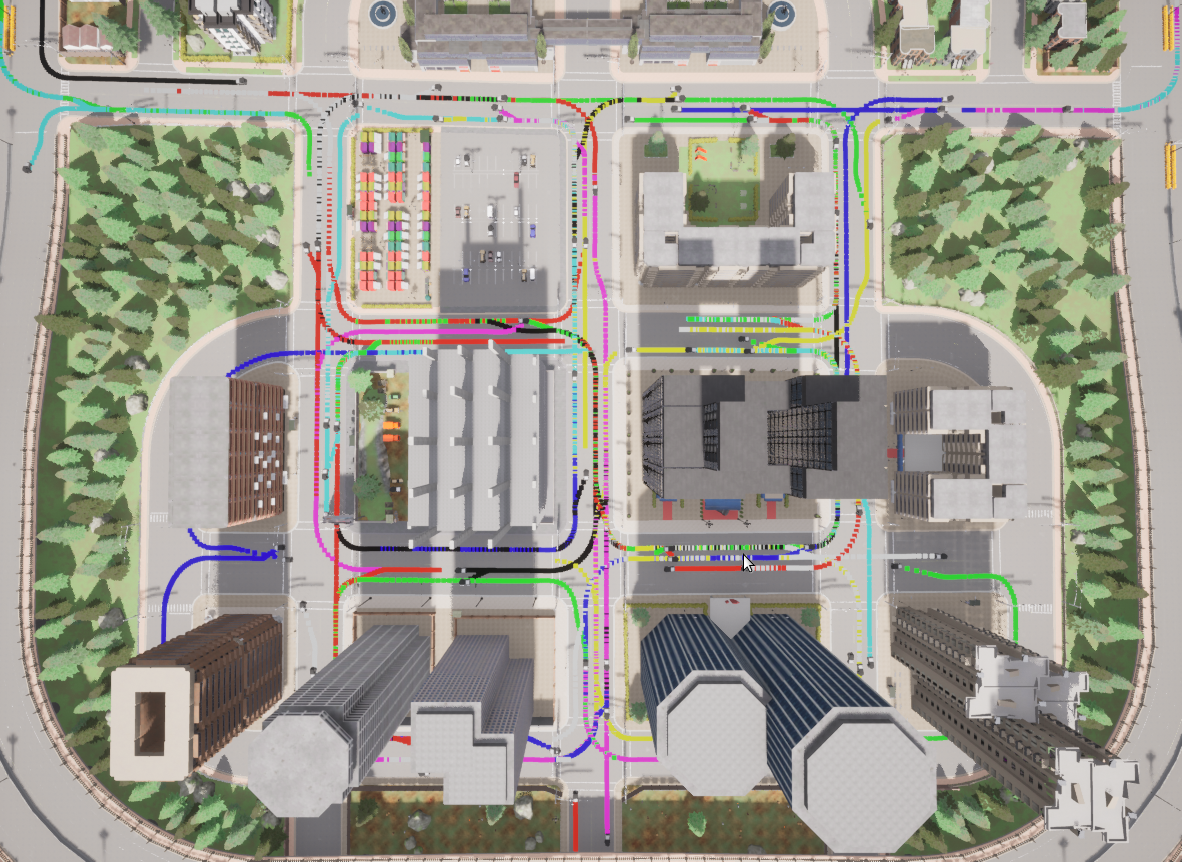} \label{subfig:all_init_trajs_80}
}
\subfigure[Cooperatively planned trajectories]{
\includegraphics[width=0.47\linewidth]{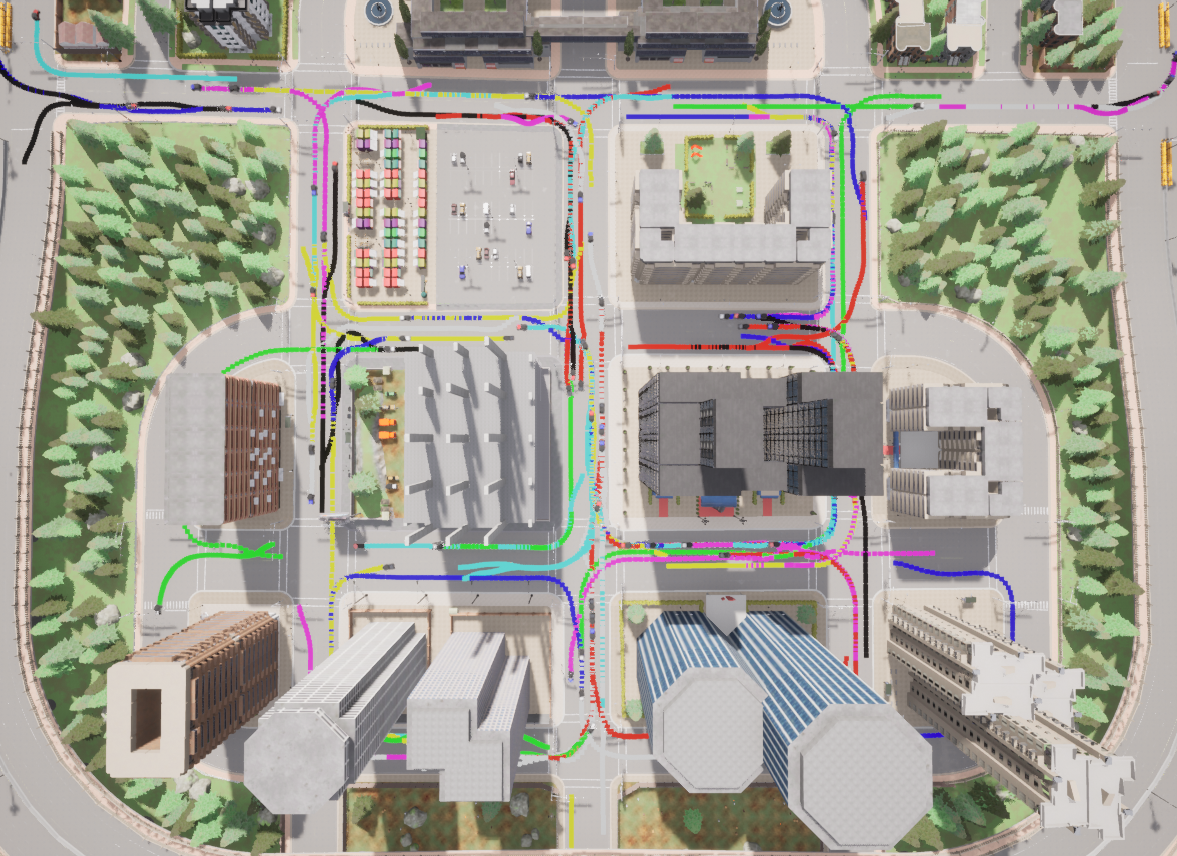} \label{subfig:all_opt_trajs_80}
}
\caption{Rough guidance trajectories and cooperatively planned trajectories for 80 CAVs in the \texttt{Town 05}. Different colors represent the trajectories for different CAVs. Note that there are many potential collision regions in the rough guidance trajectories.}
\label{fig:all_trajs_80}
\end{figure*}
\begin{figure}[t]
    \centering
    \includegraphics[width=1\linewidth]{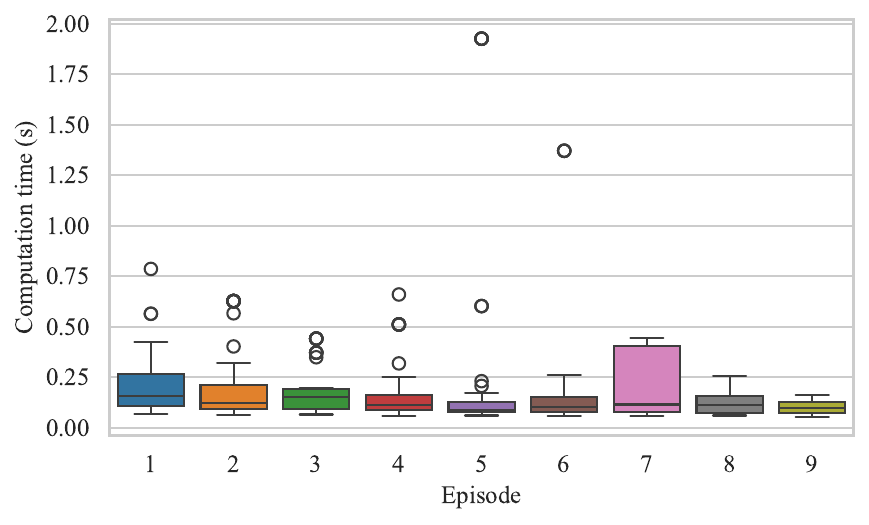}
    \caption{Computation time for each CAV in all the subgraphs. Most of the problems can be solved within 0.25 seconds, and the computation time has a small standard deviation.}
    \label{fig:NodeDistribComputingTime}
\end{figure}
\begin{figure}[t]
    \centering
    \includegraphics[width=1\linewidth]{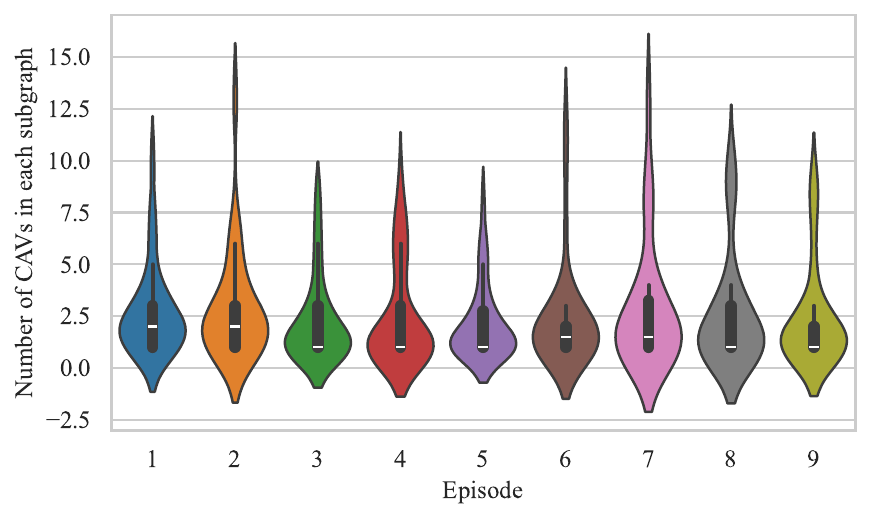}
    \caption{The number of CAVs in each subgraph within different episodes in the proposed cooperative motion planning framework.}
    \label{fig:cliqueDistribCavNumber}
\end{figure}
In this section, we implement closed-loop cooperative motion planning framework for large-scale CAVs. The objectives of this experiment are to evaluate the robustness, scalability, and effectiveness of the proposed cooperative motion planning framework including the graph evolution and receding horizon strategies. We analyze node distribution situations and qualitative rationality of the CAVs' behavior. To achieve this, we construct a randomly generated large-scale urban driving scenario containing 80 CAVs with varied target velocities as depicted in Table~\ref{tab:paramsForCavGeneration}. The guidance trajectories of the CAVs in this scenario are shown in Fig.~\ref{subfig:all_init_trajs_80}. This random scenario generation setting enables us to simulate complex urban driving tasks and validate the framework's robustness. We use a receding horizon strategy with a control horizon of $T_e=10$ and planning horizon of $T_s = 15$.
\begin{figure}[t]
\centering
\subfigure[Node sets distribution at $\tau = 20$]{
\includegraphics[width=0.46\linewidth]{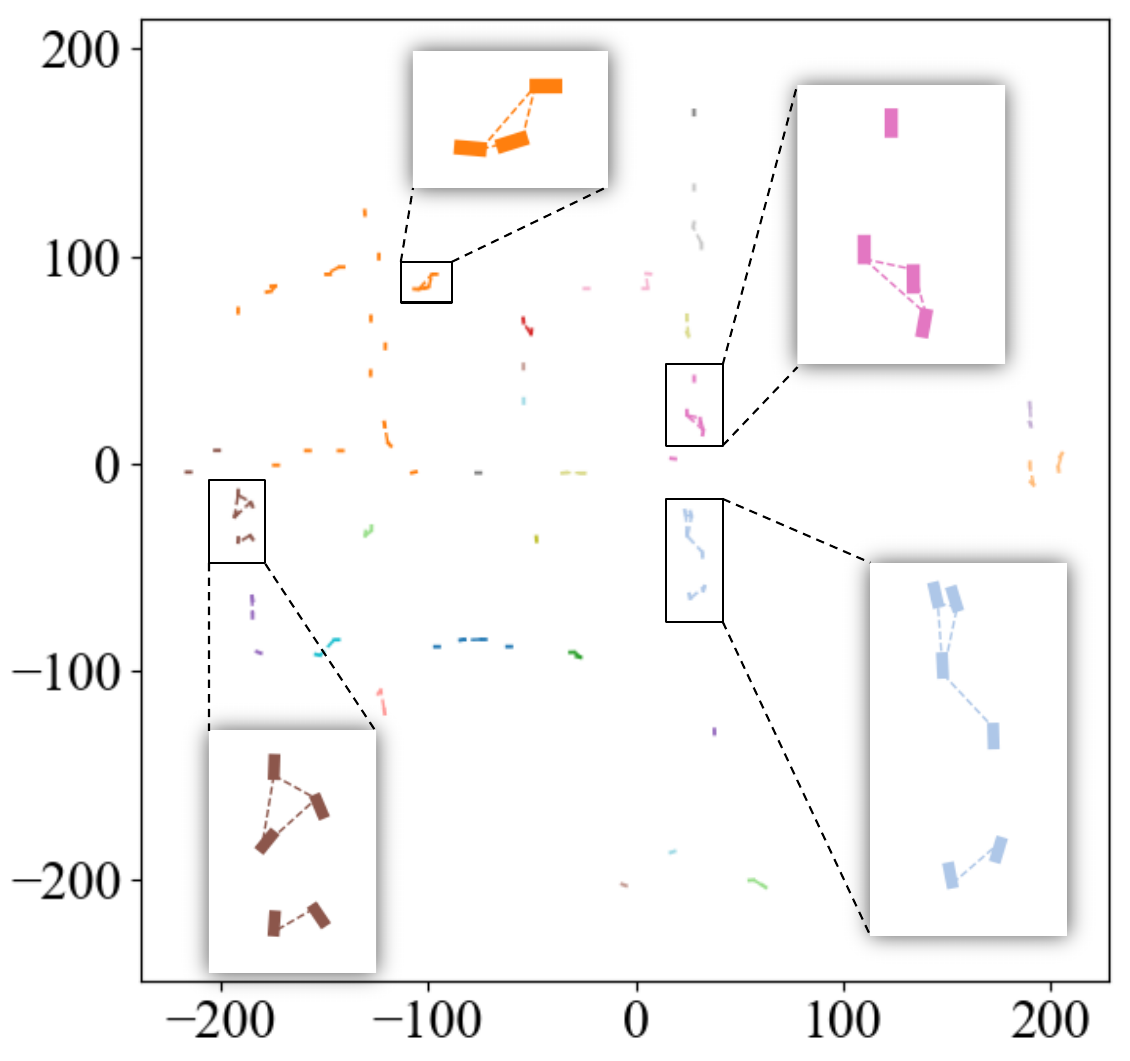} \label{subfig:clique_d_2s}
}
\subfigure[Node sets distribution at $\tau = 60$]{
\includegraphics[width=0.46\linewidth]{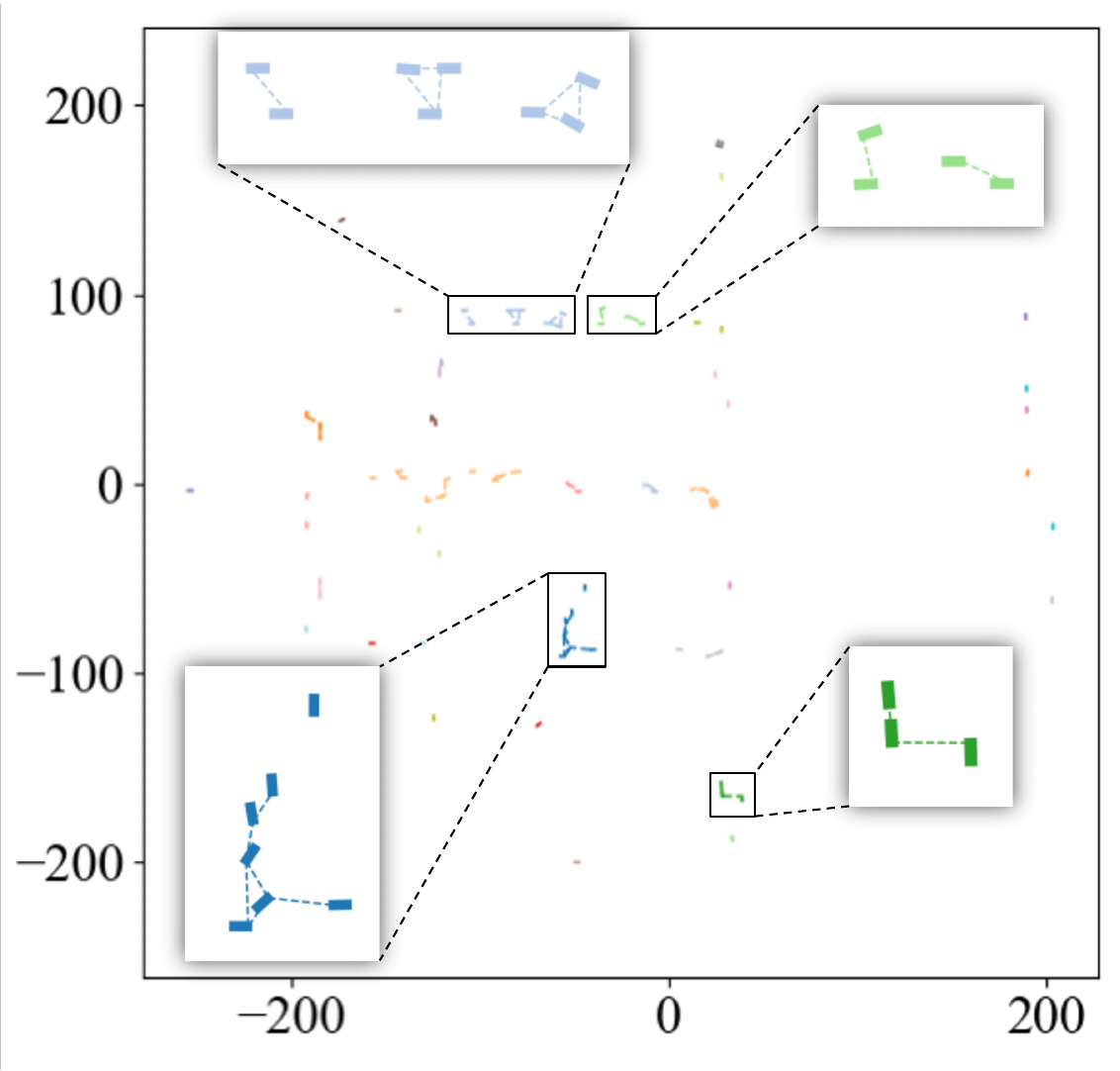} \label{subfig:clique_d_6s}
}
\caption{Node sets distribution at different time steps. Different color means different node sets of CAVs, and the dashed lines between each pair of vehicles mean the communication ability with each other.}
\label{fig:NodeDistribVisual}
\end{figure}
\subsubsection{Effectiveness Analysis of the Node Distribution}
Fig.~\ref{fig:NodeDistribComputingTime} depicts the computation time in each episode for the planning horizon within each subgraph $\mathcal{H}$ generated from the dynamic connectivity graph $\mathcal{G}$ of the 80 CAVs. In most cases, the computation time is less than 0.25~s with a small standard variance, which indicates that the computation time of the algorithm does not fluctuate much under various urban driving conditions and real-time planning can be achieved.

Note that such stable and efficient computing time is inseparable from our proposed graph evolution algorithm. With this, the subsets of CAVs $\mathcal{N}$ are redistributed and updated with an appropriate size through the whole cooperative driving process.
As demonstrated in Fig.~\ref{fig:cliqueDistribCavNumber} and Fig.~\ref{fig:NodeDistribVisual}, the sizes of the subgraphs, determined using the proposed conditional Manhattan distance, are relatively small with less than 10 CAVs. Note that in moments with high CAV density, such as at intersections or long queues on a straight road, the scale of the subgraphs increases slightly, such as the sugraph distribution of CAVs at $\tau=60$ in Fig.~\ref{subfig:clique_d_6s}, which is the main reason for the several high computation time recorded in Fig.~\ref{fig:NodeDistribComputingTime}.

\begin{figure}[t]
\centering
\subfigure[Cooperative Overtaking]{
\includegraphics[width=0.42\linewidth]{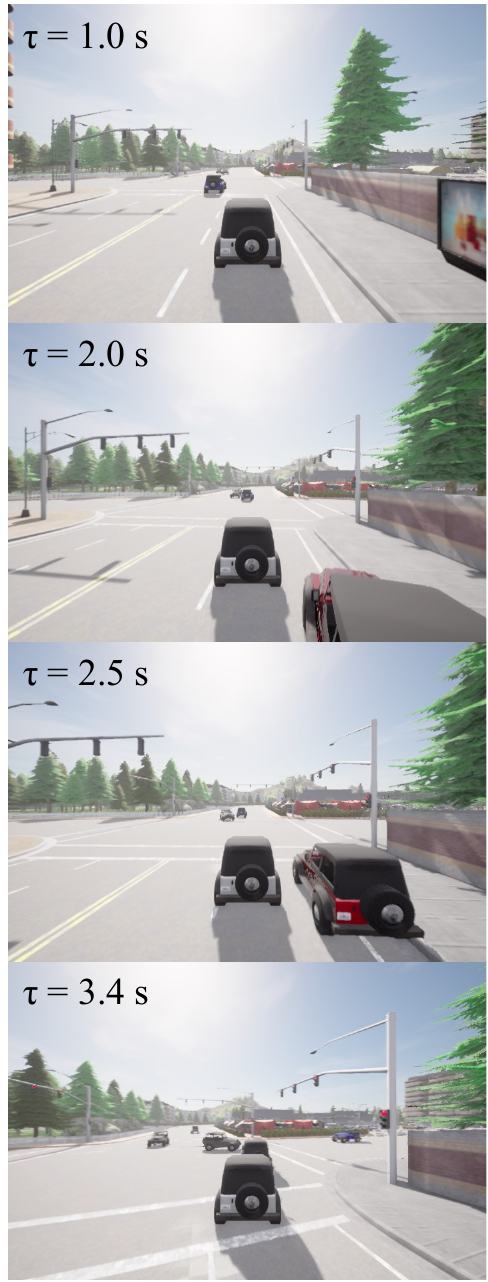} \label{subfig:overtaking}
}
\subfigure[Intersection Crossing]{
\includegraphics[width=0.42\linewidth]{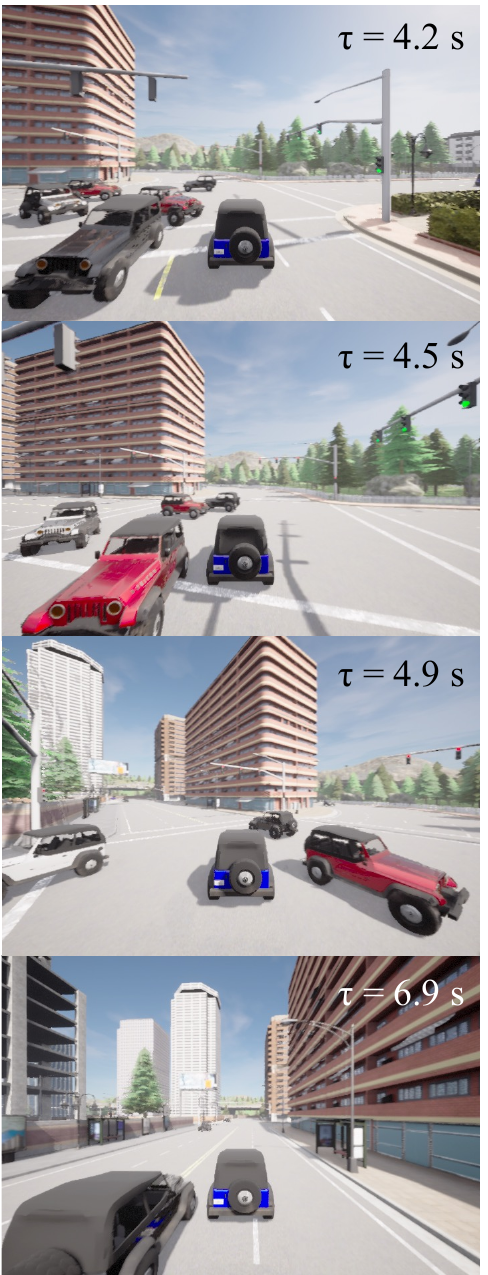} \label{subfig:intersection}
}
\caption{Driving performance in two challenging scenarios simulated in CARLA. In (a), the red CAV overtakes the leading one with minor steering and deviation from its road lane holding a narrow safety distance, due to the slight avoidance of the leading vehicle. In (b), all the CAVs collaboratively drive with a relatively high safety distance without collision in dense conditions.}
\label{fig:demo}
\end{figure}
\subsubsection{Cooperative Driving Performance Evaluation}

In this part, we thoroughly evaluate the cooperative driving performance in the high-fidelity urban traffic scenario, taking into account safety and rationality indices. 
A comparison between the initial trajectories and the cooperatively optimized driving trajectories shown in Fig.~\ref{fig:all_trajs_80} reveals that the optimized trajectories exhibit more curves, even on straight road lanes. This can be attributed to the overtaking and active avoidance behaviors of the CAVs. It is worth noting that these trajectories are sufficiently smooth, allowing the CAVs to adjust their heading angles $\theta^i_\tau$ with slight steering $\varphi^i_\tau$, thereby enhancing the comfort of passengers in CAVs within the traffic system.

For typical and challenging situations, such as overtaking or intersection areas, CAVs exhibit a proactive inclination to evade one another, thereby enhancing both driving effectiveness and safety performance. A notable illustration can be observed in Fig.~\ref{subfig:overtaking}, where the red CAV successfully executes an overtaking maneuver without compromising driving efficiency, due to the active avoidance of the white CAV on the left.
Another noteworthy challenge in urban driving pertains to navigating intersections. As depicted in Fig.~\ref{subfig:intersection}, all the CAVs efficiently drive without stopping and giving way to others in the unsignalized intersection, obviously increasing the traveling efficiency.
The reasons for the above safe and efficient driving performance are the adoption of a cooperative motion planning optimization strategy, aligned with the driving strategy outlined in Algorithm \ref{alg:alg3} and the utilization of an appropriate CAV scale within each subgraph.

\section{Conclusion}

In this work, a comprehensive cooperative motion planning framework is devised to address the cooperative motion planning problem in large-scale traffic scenarios involving excessive numbers of CAVs.  
This framework integrates trajectory generation, graph evolution, and parallel optimization for cooperative urban driving. We enhance computational efficiency of the dual consensus ADMM, by leveraging the sparsity in the dual update of box constraints. The introduction of a graph evolution strategy with the receding horizon effectively manages the scale of each OCP in each subgraph $\mathcal{H}$, regardless of the number of CAVs in graph $\mathcal{G}$.
The simulation results of cooperative motion planning within one subgraph showcase the superiority of the proposed improved consensus ADMM in terms of computational efficiency and driving efficiency, especially with higher planning horizons and increasing numbers of CAVs. We also conduct comprehensive simulations for cooperative motion planning tasks involving 80 CAVs. With the proposed methodology, the computation time for the problem is maintained to be under 0.5\,\textup{s} in most instances. The CAVs exhibit exceptional cooperation behaviors in different driving scenarios. These findings effectively showcase the capabilities of the proposed cooperative driving framework in managing large-scale CAVs, offering valuable insights for efficient coordination in modern transportation networks.

\bibliographystyle{ieeetr}
\bibliography{refs}

\begin{thebibliography}{10}

\bibitem{chang2023bev}
C.~Chang, J.~Zhang, K.~Zhang, W.~Zhong, X.~Peng, S.~Li, and L.~Li, ``{BEV-V2X}: Cooperative birds-eye-view fusion and grid occupancy prediction via {V2X}-based data sharing,'' {\em IEEE Transactions on Intelligent Vehicles}, vol.~8, no.~11, pp.~4498--4514, 2023.

\bibitem{wegener2021longitudinal}
M.~Wegener, F.~Herrmann, L.~Koch, R.~Savelsberg, and J.~Andert, ``Longitudinal vehicle motion prediction in urban settings with traffic light interaction,'' {\em IEEE Transactions on Intelligent Vehicles}, vol.~8, no.~1, pp.~204--215, 2023.

\bibitem{Toumieh2022Decentralized}
C.~Toumieh and A.~Lambert, ``Decentralized multi-agent planning using model predictive control and time-aware safe corridors,'' {\em IEEE Robotics and Automation Letters}, vol.~7, no.~4, pp.~11110--11117, 2022.

\bibitem{Xiao2022Distributed}
W.~Xiao, H.~Ren, Q.~Zhou, H.~Li, and R.~Lu, ``Distributed finite-time containment control for nonlinear multiagent systems with mismatched disturbances,'' {\em IEEE Transactions on Cybernetics}, vol.~52, no.~7, pp.~6939--6948, 2022.

\bibitem{wen2022deep}
L.-H. Wen and K.-H. Jo, ``Deep learning-based perception systems for autonomous driving: {A} comprehensive survey,'' {\em Neurocomputing}, vol.~489, pp.~255--270, 2022.

\bibitem{zhang2023robot}
L.~Zhang, Z.~Hou, J.~Wang, Z.~Liu, and W.~Li, ``Robot navigation with reinforcement learned path generation and fine-tuned motion control,'' {\em IEEE Robotics and Automation Letters}, vol.~8, no.~8, pp.~4489--4496, 2023.

\bibitem{Duan2023Cooperative}
X.~Duan, C.~Sun, D.~Tian, J.~Zhou, and D.~Cao, ``Cooperative lane-change motion planning for connected and automated vehicle platoons in multi-lane scenarios,'' {\em IEEE Transactions on Intelligent Transportation Systems}, vol.~24, no.~7, pp.~7073--7091, 2023.

\bibitem{hu2019MME}
C.~Hu, Z.~Wang, H.~Taghavifar, J.~Na, Y.~Qin, J.~Guo, and C.~Wei, ``{MME-EKF-Based} path-tracking control of autonomous vehicles considering input saturation,'' {\em IEEE Transactions on Vehicular Technology}, vol.~68, no.~6, pp.~5246--5259, 2019.

\bibitem{claussmann2020review}
L.~Claussmann, M.~Revilloud, D.~Gruyer, and S.~Glaser, ``A review of motion planning for highway autonomous driving,'' {\em IEEE Transactions on Intelligent Transportation Systems}, vol.~21, no.~5, pp.~1826--1848, 2020.

\bibitem{teng2023motion}
S.~Teng, X.~Hu, P.~Deng, B.~Li, Y.~Li, Y.~Ai, D.~Yang, L.~Li, Z.~Xuanyuan, F.~Zhu, {\em et~al.}, ``Motion planning for autonomous driving: The state of the art and future perspectives,'' {\em IEEE Transactions on Intelligent Vehicles}, 2023.

\bibitem{duan23relaxed}
J.~Duan, J.~Li, Q.~Ge, S.~E. Li, M.~Bujarbaruah, F.~Ma, and D.~Zhang, ``Relaxed actor-critic with convergence guarantees for continuous-time optimal control of nonlinear systems,'' {\em IEEE Transactions on Intelligent Vehicles}, vol.~8, no.~5, pp.~3299--3311, 2023.

\bibitem{sartoretti2019primal}
G.~Sartoretti, J.~Kerr, Y.~Shi, G.~Wagner, T.~S. Kumar, S.~Koenig, and H.~Choset, ``{PRIMAL}: Pathfinding via reinforcement and imitation multi-agent learning,'' {\em IEEE Robotics and Automation Letters}, vol.~4, no.~3, pp.~2378--2385, 2019.

\bibitem{liu2021solving}
H.~Liu, Z.~Qu, and R.~Zhu, ``Solving a multi-robot search problem with bionic {SARSA} algorithm and artificial potential field,'' in {\em 2021 China Automation Congress (CAC)}, pp.~1830--1835, IEEE, 2021.

\bibitem{klimke2022cooperative}
M.~Klimke, B.~V{\"o}lz, and M.~Buchholz, ``Cooperative behavior planning for automated driving using graph neural networks,'' in {\em 2022 IEEE Intelligent Vehicles Symposium (IV)}, pp.~167--174, IEEE, 2022.

\bibitem{wang2023coordination}
B.~Wang, X.~Gong, Y.~Wang, P.~Lyu, and S.~Liang, ``Coordination for connected and autonomous vehicles at unsignalized intersections: An iterative learning based collision-free motion planning method,'' {\em IEEE Internet of Things Journal}, 2023.

\bibitem{li202optimization}
B.~Li, T.~Acarman, Y.~Zhang, Y.~Ouyang, C.~Yaman, Q.~Kong, X.~Zhong, and X.~Peng, ``Optimization-based trajectory planning for autonomous parking with irregularly placed obstacles: A lightweight iterative framework,'' {\em IEEE Transactions on Intelligent Transportation Systems}, vol.~23, no.~8, pp.~11970--11981, 2022.

\bibitem{lee2022gpu}
Y.~Lee, M.~Cho, and K.-S. Kim, ``{GPU}-parallelized iterative {LQR} with input constraints for fast collision avoidance of autonomous vehicles,'' in {\em 2022 IEEE/RSJ International Conference on Intelligent Robots and Systems (IROS)}, pp.~4797--4804, IEEE, 2022.

\bibitem{mastalli2022feasibility}
C.~Mastalli, W.~Merkt, J.~Marti-Saumell, H.~Ferrolho, J.~Sol{\`a}, N.~Mansard, and S.~Vijayakumar, ``A feasibility-driven approach to control-limited {DDP},'' {\em Autonomous Robots}, vol.~46, no.~8, pp.~985--1005, 2022.

\bibitem{aoyama2021constrained}
Y.~Aoyama, G.~Boutselis, A.~Patel, and E.~A. Theodorou, ``Constrained differential dynamic programming revisited,'' in {\em 2021 IEEE International Conference on Robotics and Automation (ICRA)}, pp.~9738--9744, IEEE, 2021.

\bibitem{chen2019autonomous}
J.~Chen, W.~Zhan, and M.~Tomizuka, ``Autonomous driving motion planning with constrained iterative {LQR},'' {\em IEEE Transactions on Intelligent Vehicles}, vol.~4, no.~2, pp.~244--254, 2019.

\bibitem{Ma2022Alternating}
J.~Ma, Z.~Cheng, X.~Zhang, M.~Tomizuka, and T.~H. Lee, ``Alternating direction method of multipliers for constrained iterative {LQR} in autonomous driving,'' {\em IEEE Transactions on Intelligent Transportation Systems}, vol.~23, no.~12, pp.~23031--23042, 2022.

\bibitem{ma2023local}
J.~Ma, Z.~Cheng, X.~Zhang, Z.~Lin, F.~L. Lewis, and T.~H. Lee, ``Local learning enabled iterative linear quadratic regulator for constrained trajectory planning,'' {\em IEEE Transactions on Neural Networks and Learning Systems}, vol.~34, no.~9, pp.~5354--5365, 2023.

\bibitem{boyd2011distributed}
S.~Boyd, N.~Parikh, E.~Chu, B.~Peleato, J.~Eckstein, {\em et~al.}, ``Distributed optimization and statistical learning via the alternating direction method of multipliers,'' {\em Foundations and Trends in Machine Learning}, vol.~3, no.~1, pp.~1--122, 2011.

\bibitem{zhang2021semi}
X.~Zhang, Z.~Cheng, J.~Ma, S.~Huang, F.~L. Lewis, and T.~H. Lee, ``Semi-definite relaxation-based {ADMM} for cooperative planning and control of connected autonomous vehicles,'' {\em IEEE Transactions on Intelligent Transportation Systems}, vol.~23, no.~7, pp.~9240--9251, 2021.

\bibitem{banjac2019decentralized}
G.~Banjac, F.~Rey, P.~Goulart, and J.~Lygeros, ``Decentralized resource allocation via dual consensus {ADMM},'' in {\em 2019 American Control Conference (ACC)}, pp.~2789--2794, IEEE, 2019.

\bibitem{huang2023decentralized}
Z.~Huang, S.~Shen, and J.~Ma, ``Decentralized {iLQR} for cooperative trajectory planning of connected autonomous vehicles via dual consensus {ADMM},'' {\em IEEE Transactions on Intelligent Transportation Systems}, 2023.

\bibitem{ye2023spatial}
J.~Ye, L.~Xiang, and X.~Ge, ``Spatial-temporal modeling and analysis of reliability and delay in urban {V2X} networks,'' {\em IEEE Transactions on Network Science and Engineering}, 2023.

\bibitem{rikos2023asynchronous}
A.~I. Rikos, W.~Jiang, T.~Charalambous, and K.~H. Johansson, ``Asynchronous distributed optimization via {ADMM} with efficient communication,'' {\em arXiv preprint arXiv:2309.04585}, 2023.

\bibitem{shorinwa2020scalable}
O.~Shorinwa, T.~Halsted, and M.~Schwager, ``Scalable distributed optimization with separable variables in multi-agent networks,'' in {\em 2020 American Control Conference (ACC)}, pp.~3619--3626, IEEE, 2020.

\bibitem{saravanos2023distributed}
A.~D. Saravanos, Y.~Aoyama, H.~Zhu, and E.~A. Theodorou, ``Distributed differential dynamic programming architectures for large-scale multiagent control,'' {\em IEEE Transactions on Robotics}, 2023.

\bibitem{sheng2022graph}
Z.~Sheng, Y.~Xu, S.~Xue, and D.~Li, ``Graph-based spatial-temporal convolutional network for vehicle trajectory prediction in autonomous driving,'' {\em IEEE Transactions on Intelligent Transportation Systems}, vol.~23, no.~10, pp.~17654--17665, 2022.

\bibitem{brudigam2021stochastic}
T.~Br{\"u}digam, M.~Olbrich, D.~Wollherr, and M.~Leibold, ``Stochastic model predictive control with a safety guarantee for automated driving,'' {\em IEEE Transactions on Intelligent Vehicles}, 2021.

\bibitem{liu2023integrated}
H.~Liu, K.~Chen, Y.~Li, Z.~Huang, J.~Duan, and J.~Ma, ``Integrated behavior planning and motion control for autonomous vehicles with traffic rules compliance,'' in {\em 2023 IEEE International Conference on Robotics and Biomimetics (ROBIO)}, pp.~1995--2002, IEEE, 2023.

\bibitem{qu2023rl}
Y.~Qu, H.~Chu, S.~Gao, J.~Guan, H.~Yan, L.~Xiao, S.~E. Li, and J.~Duan, ``{RL-}driven {MPPI}: Accelerating online control laws calculation with offline policy,'' {\em IEEE Transactions on Intelligent Vehicles}, pp.~1--12, 2023.

\bibitem{luis2020online}
C.~E. Luis, M.~Vukosavljev, and A.~P. Schoellig, ``Online trajectory generation with distributed model predictive control for multi-robot motion planning,'' {\em IEEE Robotics and Automation Letters}, vol.~5, no.~2, pp.~604--611, 2020.

\bibitem{Qin2023Asynchronous}
D.~Qin, Z.~Jin, A.~Liu, W.-A. Zhang, and L.~Yu, ``Asynchronous event-triggered distributed predictive control for multi-agent systems with parameterized synchronization constraints,'' {\em IEEE Transactions on Automatic Control}, 2023.

\bibitem{cheng2021admm}
Z.~Cheng, J.~Ma, X.~Zhang, C.~W. de~Silva, and T.~H. Lee, ``{ADMM}-based parallel optimization for multi-agent collision-free model predictive control,'' {\em arXiv preprint arXiv:2101.09894}, 2021.

\bibitem{tassa2014control}
Y.~Tassa, N.~Mansard, and E.~Todorov, ``Control-limited differential dynamic programming,'' in {\em 2014 IEEE International Conference on Robotics and Automation (ICRA)}, pp.~1168--1175, IEEE, 2014.

\bibitem{schwarting2017safe}
W.~Schwarting, J.~Alonso-Mora, L.~Paull, S.~Karaman, and D.~Rus, ``Safe nonlinear trajectory generation for parallel autonomy with a dynamic vehicle model,'' {\em IEEE Transactions on Intelligent Transportation Systems}, vol.~19, no.~9, pp.~2994--3008, 2017.

\bibitem{grontas2022distributed}
P.~D. Grontas, M.~W. Fisher, and F.~D{\"o}rfler, ``Distributed and constrained {H$_2$} control design via system level synthesis and dual consensus {ADMM},'' in {\em 2022 IEEE 61st Conference on Decision and Control (CDC)}, pp.~301--307, IEEE, 2022.

\bibitem{bauschke10convex}
H.~Bauschke and P.~Combettes, ``Convex analysis and monotone operator theory in {Hilbert} spaces,'' {\em CMS books in mathematics}, vol.~10, pp.~978--1, 2011.

\bibitem{dosovitskiy2017carla}
A.~Dosovitskiy, G.~Ros, F.~Codevilla, A.~Lopez, and V.~Koltun, ``{CARLA}: An open urban driving simulator,'' in {\em Conference on Robot Learning}, pp.~1--16, PMLR, 2017.

\bibitem{andersson2019casadi}
J.~A. Andersson, J.~Gillis, G.~Horn, J.~B. Rawlings, and M.~Diehl, ``{CasADi}: a software framework for nonlinear optimization and optimal control,'' {\em Mathematical Programming Computation}, vol.~11, no.~1, pp.~1--36, 2019.

\end{thebibliography}

\end{document}